\renewcommand\vec[1]{\ensuremath\boldsymbol{#1}}
\renewcommand\cdots{...}
\newcommand{\tB}{\vec{\mathcal{B}}}
\newcommand{\tZ}{\vec{\mathcal{Z}}}
\newcommand{\tT}{\vec{\mathcal{T}}}
\newcommand{\tX}{\vec{\mathcal{X}}}
\newcommand{\tM}{\vec{\mathcal{M}}}
\newcommand{\vq}{\mathbf{q}}
\newcommand{\vk}{\mathbf{k}}
\newcommand{\vv}{\mathbf{v}}
\newcommand{\mbrp}[1]{\mathbb{R}_{+}^{#1}}
\newcommand{\mbr}[1]{\mathbb{R}^{#1}}
\newcommand{\tI}{\vec{\mathcal{I}}}
\newcommand{\tInb}{\mathcal{I}}
\newcommand{\idx}[1]{\mathcal{I}_{#1}}
\newcommand{\vphi}{\boldsymbol{\phi}}
\newcommand{\vpsi}{\boldsymbol{\psi}}
\newcommand{\vPsi}{\boldsymbol{\Psi}}
\newcommand{\bigoh}{\mathcal{O}}
\newcommand{\mPsi}{\vec{\Psi}}
\DeclareMathOperator*{\argmin}{arg\,min}
\DeclareMathOperator*{\trace}{Tr}
\DeclareMathOperator*{\kronstack}{\uparrow\!\otimes}
\DeclareMathOperator*{\diag}{Diag}
\newcommand{\mLambda}{\bm{\lambda}}
\newcommand{\mU}{\bm{U}}
\newcommand{\mV}{\bm{V}}
\newcommand{\mQ}{\bm{Q}}
\newcommand{\mK}{\bm{K}}
\newcommand{\mX}{\bm{X}}
\newcommand{\vw}{\boldsymbol{w}}
\DeclareRobustCommand\bmvaOneDot{\futurelet\@let@token\bmv@onedotaux}
\def\bmv@onedotaux{\ifx\@let@token.\else.\null\fi\xspace}
\def\eg{\emph{e.g}\bmvaOneDot}
\def\etal{\emph{et al}\bmvaOneDot}
\def\ie{\emph{ie}\bmvaOneDot}
\def\wrt{w.r.t\bmvaOneDot}
\def\aka{a.k.a\bmvaOneDot}
\def\vs{\emph{vs}\bmvaOneDot}
\newcommand{\tG}{\boldsymbol{\mathcal{G}}}
\newcommand{\mygthreee}[2]{\boldsymbol{\mathcal{G}}_{{\text{#1}}}\!\left(\!#2\!\right)}
\newcommand{\mygthreeehat}[2]{\boldsymbol{\widehat{\mathcal{G}}_{{\text{#1}}}}\!\left(\!#2\!\right)}
\newcommand{\vPhi}{\boldsymbol{\Phi}}
\newcommand{\mIdent}{\boldsymbol{\mathds{I}}}
\newcommand{\sIdent}{\mathds{I}}
\newcommand{\mPhi}{\boldsymbol{\Phi}}
\newcommand{\mM}{\boldsymbol{M}}
\newcommand{\mF}{\boldsymbol{F}}
\newcommand{\mW}{\boldsymbol{W}}
\newcommand{\vmu}{\boldsymbol{\mu}}
\newcommand{\stkout}[1]{{\ifmmode\text{\sout{\ensuremath{#1}}}\else\sout{#1}\fi}}
\newcites{latex}{References}
\newcommand\fs@nobottomruled{\def\@fs@cfont{\bfseries}\let\@fs@capt\floatc@ruled
  \def\@fs@pre{}
  \def\@fs@post{}
  \def\@fs@mid{\kern2pt\hrule\kern2pt}%
  \let\@fs@iftopcapt\iftrue}
\DeclareSymbolFont{extraup}{U}{zavm}{m}{n}
\DeclareMathSymbol{\varheart}{\mathalpha}{extraup}{86}
\DeclareMathSymbol{\vardiamond}{\mathalpha}{extraup}{87}
\DeclarePairedDelimiterX{\infdivx}[2]{(}{)}{%
  #1\delimsize\;#2%
}
\renewcommand{\paragraph}{%
  \@startsection{paragraph}{4}%
  {\z@}{0.75ex \@plus 1ex \@minus .2ex}{-0.3em}%
  {\normalfont\normalsize\bfseries}%
}
\begin{document}
\pagestyle{headings}
\mainmatter
\def\ECCVSubNumber{5009}  

\title{Time-rEversed diffusioN tEnsor Transformer:\\A new TENET of Few-Shot Object Detection}

\titlerunning{Time-rEversed diffusioN tEnsor Transformer (TENET)}
%
\author{Shan Zhang$^{\star, \dagger}$\orcidlink{0000-0002-5531-3296} \and
Naila Murray$^{\clubsuit}$\orcidlink{0000-0001-7032-0403} \and
Lei Wang$^{\vardiamond}$\orcidlink{0000-0002-0961-0441} \and
Piotr Koniusz\thanks{SZ was mainly in charge of the pipeline/developing the transformer. PK (corresponding author) was mainly in charge of mathematical design of TENET \& TSO.\\
This work has been accepted at the 17\textsuperscript{th} European Conference on Computer Vision (ECCV'22).$\qquad\qquad$ Code: {\fontsize{8}{8}\selectfont\url{https://github.com/ZS123-lang/TENET}}.}$^{,\S,\dagger}$\orcidlink{0000-0002-6340-5289}}
\authorrunning{Zhang \etal}
%
\institute{$^{\dagger}$Australian National University \;
$^{\clubsuit}$Meta AI \\
   $^{\vardiamond}$University of Wollongong \;
   $^\S$Data61/CSIRO\\
   $^{\dagger}$firstname.lastname@anu.edu.au, $^{\vardiamond}$leiw@uow.edu.au,  $^{\clubsuit}$murrayn@fb.com
}
\maketitle

\vspace{-0.4cm}
\begin{abstract}
In this paper, we tackle the challenging problem of Few-shot Object Detection. Existing FSOD pipelines (i) use average-pooled representations that result in information loss; and/or (ii) discard position information that can help detect object instances.
Consequently, such pipelines are sensitive to large intra-class appearance and geometric variations between support and query images.
To address these drawbacks, we propose a Time-rEversed diffusioN tEnsor Transformer (TENET), which i) forms high-order tensor representations that capture multi-way feature occurrences that are  highly discriminative, and ii) uses a transformer that dynamically extracts correlations between the query image and the entire support set, instead of a single average-pooled support embedding. 
We also propose a Transformer Relation Head (TRH), equipped with higher-order representations, which encodes correlations between query regions and the entire support set, while being sensitive to the positional variability of object instances.
Our  model achieves state-of-the-art results on PASCAL VOC, FSOD, and  COCO.
\keywords{few-shot object detection; transformer; multiple order pooling; high order pooling; heat diffusion process;}
\end{abstract}

\section{Introduction}
\label{sec:intro}

Object detectors based on deep learning, usually  addressed by supervised models,  achieve impressive performance \cite{re30,re31,re32,7,9,10} but they  rely on a large number of images with human-annotated  class labels/object bounding boxes. Moreover, object detectors cannot be easily extended to new class concepts not seen during training. 
Such a restriction limits supervised object detectors to predefined scenarios. In contrast, humans  excel at rapidly adapting to new scenarios by {\em ``storing knowledge gained while solving one problem and applying it to a different but related problem''}~\cite{west_ml_transfer_def}, also called as ``{\em transfer of practice}''~\cite{woodworth_particle}.

Few-shot Object Detection (FSOD) \cite{f9LSTD,f10ObjectDetection,f11,f12,fsod,accv,Zhang_2022_CVPR} methods mimic this ability, and  enable detection of test classes that are disjoint from training classes. They perform this adaptation using a few ``support'' images from  test classes. Successful FSOD models must (i) find promising candidate regions-of-Interest (RoIs) in query images; and (ii) accurately regress bounding box locations and predict RoI classes, under large intra-class geometric and photometric variations. 

To address the first requirement,  approaches \cite{f12,fsod,accv,Zhang_2022_CVPR} use the region proposal network \cite{re32}. For example, FSOD-ARPN \cite{fsod}, PNSD \cite{accv} and KFSOD \cite{Zhang_2022_CVPR} cross-correlate query feature maps with a class prototype formed from average-pooled (\ie, first-order) features, second-order pooled representations and kernel-pooled representations, respectively.
These methods use a single class prototype which limits their ability to leverage diverse information from different class samples.
Inspired by Transformers \cite{trans}, approach \cite{persampel} uses average pooling over support feature maps to generate a vector descriptor per  map. 
Attention mechanism is then used to modulate query image features using such descriptors.

The above methods rely on first- and second-order pooling, while so-called higher-order pooling is more discriminative  \cite{11,maxexp,hosvd}. 
Thus, we propose a non-trivial Time-rEversed diffusioN tEnsor Transformer (TENET). With TENET, higher-order tensors undergo a time-reversed heat diffusion to condense signal on super-diagonals of tensors, after which coefficients of these super-diagonals are passed to a Multi-Head
Attention transformer block.  
TENET performs second-, third- and fourth-order pooling. 
However, higher-order pooling suffers from several issues, \ie, (i) high computational complexity of computing tensors with three/more modes, (ii) non-robust tensor estimates due to the limited number of vectors being aggregated, and (iii) tensor burstiness\cite{11}.

To this end, we propose a Tensor Shrinkage Operator (TSO) which generalizes spectral power normalization (SPN) operators \cite{maxexp}, such as the Fast Spectral MaxExp operator (MaxExp(F)) \cite{maxexp}, to higher-order tensors. As such, it can be used to reduce tensor burstiness.
Moreover, by building on the linear algebra of the heat diffusion process (HDP) \cite{smola_graph} and recent generalisation of HDP to SPN operators \cite{maxexp}, we also argue that such operators  can reverse the diffusion of signal in autocorrelation or covariance matrices, and high-order tensors, instead of just reducing the burstiness. Using a parametrization which lets us control the reversal of diffusion, TSO  condenses signal captured by a tensor toward its super-diagonal, preserving information along it.
This super-diagonal serves as our final representation, reducing the feature size from $d^r$ to $d$,  making our representation  computationally tractable. 
Finally, shrinkage operators are known for their ability to estimate covariances well when only a small number of samples are available \cite{Ledoit110}.
To the best of our knowledge, we are the first to show that MaxExp(F) is a shrinkage operator, and to propose TSO for orders $r\!\geq\!2$.

To address the second requirement, FSOD-ARPN introduces a multi-relation head that captures global, local and patch relations between support and query objects, while PNSD passes second-order autocorrelation matrices to a similarity network. However, FSOD-ARPN and PNSD do not model  spatial relations  \cite{vit}. 
The QSAM \cite{persampel} uses attention to highlight the query RoI vectors that are similar to the set of support vectors (obtained using only first-order spatial average pooling).
Thus, we introduce a Transformer Relation Head (TRH) to improve modeling of spatial relations.
TRH computes self-attention between spatially-aware features and global spatially invariant first-, second- and higher-order TENET representations of support and/or query RoI features. 
The second attention mechanism of TRH performs cross-attention between 
$Z$ support embeddings (for $Z$-shot if $Z\!\geq\!2$), and a set of global representations of query RoIs. This attention encodes similarities between query RoIs and support samples. 


Our FSOD pipeline  contains TENET Region Proposal Network (TENET RPN) and the TRH, both equipped with discriminative TENET representations, improving   generation of RoI proposals and modeling of query-support relations.
%

\vspace{0.2cm}
Below are our contributions:
\renewcommand{\labelenumi}{\roman{enumi}.}
\vspace{-0.1cm}
\begin{enumerate}[leftmargin=0.6cm]
\item We propose a Time-rEversed diffusiON tEnsor Transformer, called TENET, which captures high-order patterns (multi-way feature cooccurrences) and decorrelates them/reduces tensor burstiness. To this end, we generalize the MaxExp(F) operator \cite{maxexp} for autocorrelation/covariance matrices to higher-order tensors by introducing the so-called Tensor Shrinkage Operator (TSO).
\item We propose a Transformer Relation Head (TRH) that is sensitive both to the variability between the $Z$ support samples provided in a $Z$-shot scenario, and to positional variability between support and query objects.
\item In \S \ref{sec:tempt}, we demonstrate that TSO emerges from the MLE-style minimization over the Kullback-Leibler (KL) divergence between the input and output spectrum, with the latter being regularized by the Tsallis entropy \cite{tsallis}. Thus, we show that TSO meets the definition of shrinkage estimator whose target is the identity matrix (tensor).
\end{enumerate}
Our proposed method outperforms the state of the art on novel classes by 4.0\%, 4.7\% and 6.1\% mAP on PASCAL VOC 2007, FSOD, and COCO respectively.

\section{Related Works}
\label{sec:related}

Below, we review  FSOD models and vision transformers, followed by a  discussion on feature grouping, tensor descriptors and spectral power normalization.

\paragraph{Few-shot Object Detection.}
A Low-Shot Transfer Detector (LSTD) \cite{f9LSTD} leverages rich source domain to construct a target domain detector with few training samples but needs to be fine-tuned to novel categories. 
Meta-learning-based approach \cite{f12} reweights RoI features in the detection head without fine-tuning. Similarly, MPSR \cite{multi} deals with scale invariance by ensuring the detector is trained over multiple scales of positive samples. NP-RepMet \cite{neg} introduces a negative- and positive-representative learning framework via triplet losses that bootstrap the classifier. FSOD-ARPN \cite{fsod} is a general FSOD network equipped with a channel-wise attention mechanism and multi-relation detector that scores pair-wise object similarity in both the RPN and the detection head, inspired by Faster R-CNN. PNSD \cite{accv}, inspired by FSOD-ARPN \cite{fsod}, uses contraction of second-order autocorrelation matrix against query feature maps to produce attention maps. 
Single-prototype (per class) methods  
suffer information loss. Per-sample Prototype FSOD \cite{persampel} uses the entire support set to form prototypes of a class but it ignores spatial information within regions. Thus, we employ TENET RPN and TRH 
to capture spatial and high-order patterns, and extract correlations between the query image and the $Z$-shot support samples for a class. 

\paragraph{Transformers in Vision.}
Transformers, popular in natural language processing \cite{trans}, have also become  popular in  computer vision. Pioneering works such as ViT \cite{vit} show that transformers can  achieve the state of the art in image recognition. DETR  \cite{detr} is an end-to-end object detection framework with a transformer encoder-decoder used on top of backbone. Its deformable variant \cite{defordetr} improves the performance/training efficiency. SOFT \cite{rbf}, the SoftMax-free transformer  approximates the self-attention kernel  by replacing the SoftMax function with Radial Basis Function (RBF), achieving linear complexity. In contrast, our TENET is concerned with reversing the diffusion of signal in high-order tensors via the shrinkage operation, with the goal of modeling spatially invariant high-order statistics of regions. Our attention unit, so-called Spatial-HOP in TRH, also uses RBF to capture correlations of between spatial and high-order descriptors. 

\paragraph{Multi-path and Groups of Feature Maps.}
GoogleNet \cite{g52} has shown that multi-path representations (several network branches) lead to classification improvements. ResNeXt \cite{g61} adopts group convolution \cite{g34} in the ResNet bottleneck block. 
SK-Net \cite{g38}, based on SE-Net \cite{g29}, uses feature map attention across two network branches. However,  these approaches do not model feature statistics. 
ReDRO \cite{saimunur_redro} samples groups of features to apply the matrix square root over submatrices to improve the computational speed. In contrast,  TENET forms fixed groups of features to form second-, third- and fourth-order tensors (simply using groups of features to form second-order matrices is not effective).

\paragraph{Second-order Pooling (SOP).} 
Region Covariance Descriptors for texture \cite{36} and object recognition  \cite{11}   use SOP. 
Approach  \cite{14} uses spectral pooling for fine-grained image classification, whereas SoSN \cite{sosn} and its variants \cite{zhang2020few,Zhang_2021_CVPR} leverage SOP and element-wise Power Normalization (PN) \cite{11} for end-to-end few-shot learning. In contrast, we develop a multi-object few-shot detector. Similarly to SoSN, PNSD \cite{accv} uses SOP with PN as representations  which are passed to the detection head. 
So-HoT \cite{me_domain} that uses high-order tensors for domain adaptation is also somewhat related to TENET but So-HoT uses multiple polynomial kernel matrices, whereas we apply TSO to achieve decorrelation and shrinkage. TENET without TSO reduces to polynomial feature maps and performs poorly.

\paragraph{Power Normalization (PN).} 
Burstiness is ``the property that a given visual element appears more times in an image than a statistically independent model would predict''. PN \cite{s17} limits the  burstiness of first- and second-order statistics due to the binomial PMF-based feature detection factoring out feature counts \cite{s17,11,14}. Element-wise MaxExp pooling  \cite{s17} gives likelihood of ``at least one particular visual word being present in an image'', whereas SigmE pooling \cite{14} is its practical approximation. Noteworthy are   recent Fast Spectral MaxExp operator, MaxExp(F) \cite{maxexp}, which   reverses the heat diffusion on the underlying loopy graph of second-order matrix to some desired past state \cite{smola_graph}, and Tensor Power-Euclidean (TPE) metric \cite{hosvd}. TPE alas uses the Higher Order Singular Value Decomposition \cite{lathauwer_hosvd}, which makes TPE intractable for millions of region proposals per dataset. Thus, we develop TENET and TSO, which reverses diffusion on high-order tensors by shrinking them towards the super-diagonal of tensor.

\section{Background}
\label{sec:pre}

Below, we detail notations 
and show how to calculate  multiple higher-order statistics and Power Normalization, followed by revisiting the transformer block.

\paragraph{Notations.}
Let $\mathbf{x}\!\in\!\mbr{d}$ be a $d$-dimensional feature vector. $\idx{N}$ stands for the index set $\{1,2,\cdots,N\}$. 
We define a vector of all-ones as $\mathbf{1}\!=\!\left[1,\cdots,1\right]^T$. 
Let $\tX^{(r)} ={\kronstack}_r \mathbf{x}$ denote a tensor of order $r$ generated by the $r$-th order outer-product of $\mathbf{x}$, and $\tX^{(r)}\in\mbr{d\times d\cdots\times d}$. 
Typically, capitalised boldface symbols such as $\boldsymbol{\Phi}$ denote matrices, lowercase boldface symbols such as $\boldsymbol{\phi}$ denote vectors and regular case such as $\Phi_{i,j}$, $\phi_{i}$, $n$ or $Z$ denote scalars, \eg, $\Phi_{i,j}$ is the $(i,j)$-th coefficient of $\boldsymbol{\phi}$. 

\paragraph{High-order Tensor Descriptors (HoTD).} 
 Below we  formalize  HoTD \cite{me_domain}. 
\begin{proposition}
Let $\mPhi\equiv[\vphi_1,\cdots,\vphi_N]\in\mbr{d\times N}$
and
$\mPhi'\!\equiv[\vphi'_1,\cdots,\vphi'_M]\in\mbr{d\times M}$
be feature vectors extracted from some two image regions. 
Let $\vw\in\mbrp{N}$, $\vw'\in\mbrp{M}$ be some non-negative weights and $\vmu,\vmu'\!\in\!\mbr{d}$ be the mean vectors of $\mPhi$
and $\mPhi'$, respectively.
A linearization of the sum of polynomial kernels of degree $r$,
\begin{align}
&\!\!\!\big<\tM^{(r)}(\mPhi; \vw,\vmu),\,\tM^{(r)}(\mPhi'\!; \vw'\!\!,\vmu'\!)\big> = 
\frac{1}{NM}\sum\limits_{n=1}^N\sum\limits_{m=1}^M w_n^rw'^{r}_m
\left<\vphi_n\!-\!\vmu, \vphi'_{m}\!-\!\vmu'\right>^r\!\!,
\end{align}
yields the tensor feature map
\begin{equation}
\!\!\tM^{(r)}(\mPhi; \vw,\vmu)=\frac{1}{N}\sum\limits_{n=1}^N w_n^r\,{\kronstack}_r\,(\vphi_n\!-\!\vmu)\in
\mbr{d\times d\cdots\times d}.\label{eq:hok1}
\end{equation}
\end{proposition}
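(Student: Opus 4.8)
The plan is to verify the identity by direct computation, relying on the multiplicative behaviour of the tensor (entrywise) inner product on rank-one outer products together with the bilinearity of the inner product. Once a single algebraic fact about $r$-fold outer products is in hand, the proposition follows by linearity without any analytic subtlety.

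First I would isolate the key lemma: for any $\vu,\vv\in\mbr{d}$,
\[
\big<{\kronstack}_r \vu,\,{\kronstack}_r \vv\big>=\big<\vu,\vv\big>^r.
\]
The proof is immediate once one writes out coordinates: the $(i_1,\dots,i_r)$-th entry of ${\kronstack}_r \vu$ is $u_{i_1}\cdots u_{i_r}$, so the inner product equals $\sum_{i_1,\dots,i_r}\prod_{k=1}^r u_{i_k}v_{i_k}$, and since each index $i_k$ ranges independently over $\idx{d}$, this multi-index sum factors into the product of $r$ copies of $\sum_i u_i v_i=\big<\vu,\vv\big>$.

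Next I would substitute the definition \eqref{eq:hok1} of $\tM^{(r)}$ into the left-hand side. Writing $\vzeta_n\!=\!\vphi_n\!-\!\vmu$ and $\vzeta'_m\!=\!\vphi'_m\!-\!\vmu'$ for the centred features and pushing the inner product through both finite sums by bilinearity gives
\[
\big<\tM^{(r)}(\mPhi;\vw,\vmu),\,\tM^{(r)}(\mPhi';\vw',\vmu')\big>
=\frac{1}{NM}\sum_{n=1}^N\sum_{m=1}^M w_n^r w'^{r}_m\,\big<{\kronstack}_r \vzeta_n,\,{\kronstack}_r \vzeta'_m\big>.
\]
Applying the lemma term by term replaces each tensor inner product by $\big<\vzeta_n,\vzeta'_m\big>^r=\big<\vphi_n-\vmu,\vphi'_m-\vmu'\big>^r$, which is exactly the right-hand side of the claimed equation and confirms that \eqref{eq:hok1} is the correct linearisation of the weighted sum of degree-$r$ polynomial kernels.

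There is no genuine obstacle; the argument is a routine kernel-linearisation calculation. The only step warranting care is the factorisation in the lemma, \ie justifying that the multi-index sum splits into a product of single-index sums, which is valid precisely because the $r$ index ranges are independent and the summand is separable across modes (equivalently, one may identify ${\kronstack}_r \vu$ with the Kronecker power $\vu^{\otimes r}$ and invoke $\big<\vu^{\otimes r},\vv^{\otimes r}\big>=\big<\vu,\vv\big>^r$). I would also note that the non-negativity of $\vw,\vw'$ is not used in the identity itself; it is only relevant downstream, ensuring that the powers $w_n^r$ and the resulting pooling operator are well behaved.
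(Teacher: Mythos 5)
Your proof is correct. The paper states this proposition as background material (citing So-HoT \cite{me_domain}) and does not supply a proof of its own; your argument --- the coordinate-wise factorization $\big<{\kronstack}_r \vu,\,{\kronstack}_r \vv\big>=\big<\vu,\vv\big>^r$ followed by bilinearity across the two weighted sums --- is exactly the standard linearization underlying the cited result, and both your lemma and the substitution step are sound (including your correct observation that non-negativity of $\vw,\vw'$ plays no role in the identity itself).
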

In our paper, we set $\vw\!=\!\vw'\!=\!1$ and $\vmu\!=\!\vmu'\!=\!0$, whereas orders $r\!=\!2,3,4$.


\paragraph{(Eigenvalue) Power Normalization ((E)PN).} For second-order matrices, MaxExp(F), a state-of-the-art  EPN \cite{maxexp}, is defined as
%
\begin{equation}
g(\lambda; \eta)\!=\!1-(1-\lambda)^\eta
\label{eq:maxexp}
\end{equation}
on the $\ell_1$-norm normalized spectrum from SVD ($\lambda_i\!:=\!\lambda_i/(\varepsilon\!+\!\sum_{i'} \lambda_{i'})$), and on 
symmetric positive semi-definite matrices as
%
\begin{equation}
\mygthreeehat{MaxExp}{\,\mM;\eta\,}\!=\!\sIdent\!-\!\left(\sIdent\!-\!\mM\right)^\eta,
\label{eq:maxexpf}
\end{equation}
where $\mM$ is a trace-normalized matrix, \ie, $\mM\!:=\!\mM/(\varepsilon\!+\!\trace(\mM))$ with $\varepsilon\!\approx\!1e\!-\!6$, and $\trace (\cdot)$ denotes the trace. The time-reversed heat diffusion process is adjusted by integer $\eta\!\geq\!1$. The larger the value of $\eta$ is, the more prominent the time reversal is. $\widehat{\mathcal{G}}_{\text{MaxExp}}$ is followed by the element-wise PN, called SigmE \cite{maxexp}:
%
\begin{align}
& \!\!\!\!\!
\mathcal{G}_{\text{SigmE}}(\mathit{p}; \eta') =\!2/(1+e^{-\eta' \mathit{p}})-1,
\label{eq:sigme1}
\end{align}
%
where $p$ takes each output coefficient of Eq. \eqref{eq:maxexpf}, 
$\eta'\!\!\geq\!1$ controls  detecting feature occurrence \vs feature counting trade-off.

\paragraph{Transformers.} An architecture based on blocks of attention and MLP layers forms a transformer network \cite{trans}. Each attention layer takes as input a set of query, key  and value  matrices, denoted $\mQ$, $\mK$ and $\mV$, respectively.
Let $\mQ\equiv[\vq_1,\cdots,\vq_N]\in\mbr{d\times N}$, $\mK\equiv[\vk_1,\cdots,\vk_N]\in\mbr{d\times N}$, and $\mV\equiv[\vv_1,\cdots,\vv_N]\in\mbr{d\times N}$, where $N$ is the number of input feature vectors, also called tokens, and $d$ is the channel dimension. A generic attention layer can then be formulated as:
\begin{equation}
\text{Attention}(\mQ,\mK,\mV)=\alpha\big(\gamma(\mQ,\mK)\big) \mV^T.
\label{eq:atten}
\end{equation}
Self-attention
is composed of $\alpha(\gamma(\cdot,\cdot))$, $\alpha(\cdot)$ is a non-linearity and  $\gamma(\cdot,\cdot)$ computes similarity. A popular choice is SoftMax with the scaled dot product \cite{trans}:
\begin{equation}
\alpha(\cdot)=\text{SoftMax}(\cdot) \quad\text{and}\quad \gamma(\mQ,\mK)=\frac{\mQ^T\mK}{\sqrt{d}}.
\end{equation}
Note the LayerNorm and residual connections are added at the end of each block.

To facilitate the design of linear self-attention, approach \cite{rbf} introduces a SoftMax-free transformer with the dot product replaced by the RBF kernel:
\begin{equation}
\alpha(\cdot)=\text{Exp}(\cdot) \quad\text{and}\quad  \gamma(\mQ,\mK)=-\frac{1}{2\sigma^2}\left[\left\lVert \vq_i-\vk_j\right\rVert_2^2\right]_{i,j\in\idx{{N}}},
\label{eq:rbf}
\end{equation}
where $[\cdot]$ stacks distances into a matrix, $\sigma^2$ is the kernel variance set by cross-validation, and $\vq_i$ and $\vk_j$ are $\ell_2$-norm normalized. 

The Multi-Head Attention (MHA) layer uses $T$ attention units whose outputs are concatenated. Such an attention splits input matrices $\mQ$, $\mK$, and $\mV$ along their channel dimension $d$ into $T$ groups and computes attention on each group:
\begin{equation}
\text{MHA}(\mQ,\mK,\mV)=[\mathbf{A}_1,\dots,\mathbf{A}_T],
\label{eq:mha}
\end{equation}
where $[\cdot]$ performs concatenation along the channel mode, the $m$\textsuperscript{th}  head is  $\mathbf{A}_m \!\!=\!\!\text{Attention}(\mQ_m, \mK_m, \mV_m)$, and $\mQ_m\!\in\!\mbr{\frac{d}{T}\times N}$, $\mK_m\!\in\!\mbr{\frac{d}{T}\times N}$, and $\mV_m\!\in\!\mbr{\frac{d}{T}\times N}$.

\begin{figure}[t]
\centering\includegraphics[ width=1.0\linewidth]{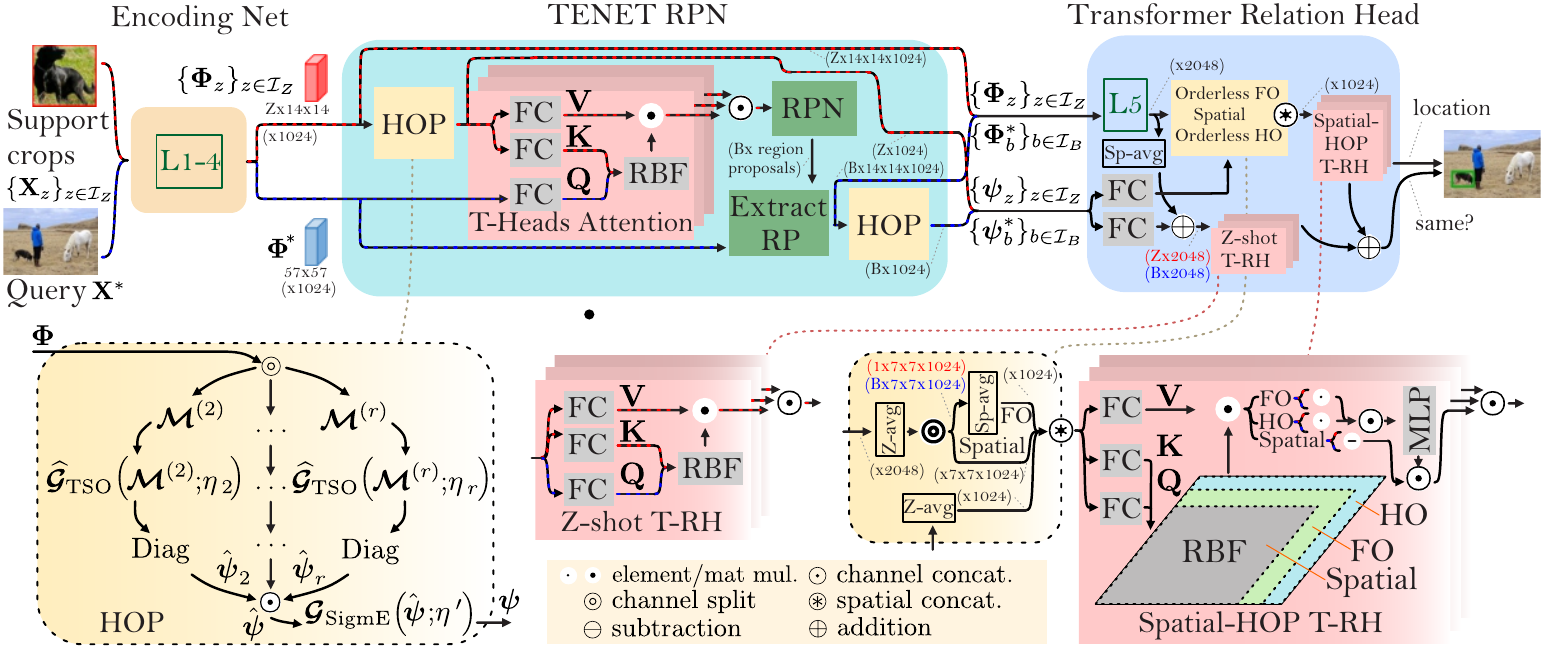}
\caption{Our pipeline. ({\em top}) We pass ground truth support crops $\{\tX_z\}_{z\in\idx{Z}}$ for $Z$-shot problem and a query image $\tX^*$ to the Encoding Network (EN). The resulting convolutional feature maps, $\{\boldsymbol{\Phi}_z\}_{z\in\idx{Z}}$ for support and $\boldsymbol{\Phi}^*$ for query, are passed to the TENET-RPN module to produce a set of $B$ descriptors $\{\boldsymbol{\Phi}_b^*\}_{b\in\idx{B}}$ for $B$ Region Proposals (RP \aka RoIs) of query image, and high-order pooled (HOP) representations for both support crops $\{\boldsymbol{\psi}_z\}_{z\in\idx{Z}}$ and query image RoIs $\{\boldsymbol{\psi}_b^*\}_{b\in\idx{B}}$.
TENET contains HOP units which compute high-order tensor descriptors and then apply a novel tensor shrinkage operator to them, yielding   spatially orderless HOP representations. Sets of features $\{\boldsymbol{\psi}_z\}_{z\in\idx{Z}}$ and $\{\boldsymbol{\psi}_b^*\}_{b\in\idx{B}}$ are then passed to the Transformer Relation Head (TRH), along with the convolutional features $\{\boldsymbol{\Phi}_z\}_{z\in\idx{Z}}$ and $\{\boldsymbol{\Phi}^*_b\}_{b\in\idx{B}}$. 
The TRH consists of $Z$-shot and Spatial-HOP transformer heads for measuring similarities across support regions and query proposals, and refining the localization of target objects, respectively. 
({\em bottom}) Details of HOP, ``Orderless FO, Spatial, Orderless HO'', Z-shot T-RH and Spatial-HOP T-RH blocks. 
See \S \ref{pipe-det} for details.
} 
\label{fig:embeddings}
\end{figure}

\section{Proposed Approach}
\label{sec:approach}

Given  a set of $Z$ support crops $\{\mX_z\}_{z\in\idx{Z}}$ and a query image $\mX^*$ per episode, our approach learns a matching function between representations of query RoIs and and support crops. Fig.~\ref{fig:embeddings} shows our pipeline comprised of three main modules:
\begin{enumerate}
    \item \textbf{Encoding Network} (EN)  extracts feature map $\vPhi \in \mbr{d\times N}$ per image (of $N\!=\!W\!\times\!H$ spatial size) from  query and support images via ResNet-50. 
    \item \textbf{TENET RPN} extracts RoIs from the query image and computes embeddings for  the query RoIs and the support crops. Improved attention maps are obtained by T-Heads Attention (THA) that operates on TENET descriptors. 
    \item \textbf{Transformer Relation Head} captures relations between query and support features using self- and cross-attention mechanisms. This head produces  representations 
    for the classifier and bounding-box refinement regression loss.
\end{enumerate}
 
Next, we describe TENET RPN, followed by the Transformer Relation Head.

\subsection{Extracting representations for support and query RoIs}
\label{sec:tempt}


Fig.~\ref{fig:embeddings} ({\em top}) shows our TENET RPN that produces embeddings and query RoIs. Central to this module is our HOP unit that produces  Higher-order Tensor Descriptors (HoTDs). HOP splits features  along the channel mode into multiple groups of feature maps, from which second-, third- and fourth- order tensors are aggregated over desired regions. HoTDs use a generalization of the MaxExp(F) to higher-order tensors, called the Tensor Shrinkage Operator (TSO).

\paragraph{Tensor Shrinkage Operator.}  Ledoit and Wolf \cite{Ledoit110} define autocorrelation/ covariance matrix estimation as a trade-off between the sample matrix $\mM$ and a highly structured operator $\mF$, using the linear combination $(1\!-\!\delta)\mM\!+\!\delta\mF$.
For symmetric positive semi-definite matrices and tensors, one can devise a convex shrinkage operator by minimizing some divergence $d(\mLambda, \mLambda')$ between the source and target spectra, where $\mLambda'$ is  regularized by $\Omega(\mLambda')$ with weight $\delta\!\geq\!0$:
\begin{align}
& \mLambda^*\!=\!\argmin_{\mLambda'\geq 0} d(\mLambda, \mLambda') + \delta\Omega(\mLambda').
\label{eq:shr}
\end{align}
%
Below we derive the TSO as a generalization of MaxExp(F)\footnote{For $r\!=\!2$, Eq. \eqref{eq:tso_1} yields $\text{Diag}(\mIdent\!-\!(\mIdent\!-\!\mM)^{\eta_2})$. $\text{Diag}(\text{Sqrtm}(\mM))$ is its approximation.}.

\definecolor{beaublue}{rgb}{0.8, 0.85, 0.8}
\definecolor{blackish}{rgb}{0.2, 0.2, 0.2}
\vspace{0.1cm}
\begin{minipage}{0.92\linewidth}
\centering
\begin{tcolorbox}[grow to left by=0.01\linewidth, width=1\linewidth, colframe=blackish, colback=beaublue, boxsep=0mm, arc=2mm, left=2mm, right=2mm, top=2mm, bottom=2mm]

With $\mygthreeehat{TSO}{\,\cdot\,}$  we extract representations $\hat{\vpsi}_r$ from HoTDs $\tM^{(r)}$: 
\begin{align}
& \mygthreeehat{TSO}{\tM^{(r)};\eta\,}\!=\!\tI_r\!-\left(\tI_r-\tM^{(r)}\right)^{\eta}\!\!, \label{eq:tso1}\\
& \hat{\vpsi}_r\!=\!\diag\left(\mygthreeehat{TSO}{\tM^{(r)};\eta_r\,}\right), \label{eq:tso_1}\\
& \vpsi_r\!=\!\mygthreee{SigmE}{\,\hat{\vpsi}_r;\eta'_r\,},\label{eq:tso_2}
\end{align}
where 
$\diag(\cdot)$ extracts the super-diagonal of tensor. The identity tensor $\tI_r$ of order $r$ is defined  such that  $\tInb_{1,\cdots,1}\!=\!\tInb_{2,\cdots,2}\cdots\!=\tInb_{d,\cdots,d}\!=\!1$ and $\tInb_{i_1,\cdots,i_r}\!=\!0$ if $i_j\!\neq\!i_k$ and  $j\!\neq\!k,\,j,k\!\in\!\idx{r}$. 



\begin{theorem}
Let $\boldsymbol{\lambda}$ be the $\ell_1$-norm normalized spectrum, as in Eq. \eqref{eq:maxexp}. Then $g(\lambda; \eta)\!=\!1-(1-\lambda)^\eta$ is  a shrinkage operator as it is a solution to  Eq. \eqref{eq:shr} if  $\delta\!=\!\frac{\eta t^{1/\eta}}{s}(1-\frac{1}{\eta})$, and the Kullback-Leibler  divergence and the Tsallis entropy are substituted for the divergence term $d$ and the regularization term $\Omega$, respectively. In the above theorem, $\boldsymbol{\lambda}\in\mbr{d}_+$, $s\!=\!d\!-\!1$, $t\!=\!d\!-\!t'$ and $t'\!>\!0$ is the trace of $g(\boldsymbol{\lambda}; \eta)$.$\!\!\!$
%
\label{th:shrink}
\end{theorem}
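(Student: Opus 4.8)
The plan is to exploit that $g$ acts entrywise on the spectrum and that both the divergence $d$ and the regulariser $\Omega$ in Eq.~\eqref{eq:shr} decompose coordinatewise, reducing the vector problem to a scalar one per eigenvalue. The decisive first step is to read the divergence and entropy on the \emph{residual} spectrum via $\nu_i \!=\! 1\!-\!\lambda_i$ and $\mu_i \!=\! 1\!-\!\lambda'_i$ --- the remaining-signal variable natural to the heat-diffusion reading of MaxExp(F), and essentially the only variable under which a KL-plus-Tsallis objective can reproduce the polynomial $1-(1-\lambda)^\eta$ at all. Under this substitution the operator $\lambda'_i \!=\! 1\!-\!(1\!-\!\lambda_i)^\eta$ collapses to the clean power map $\mu_i \!=\! \nu_i^{\eta}$, and the remaining algebra is carried out on this power map rather than on $g$ directly.

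Next I would instantiate the scalar objective: a generalized KL (I-divergence) term between the input residual $\nu$ and the output residual $\mu$, plus $\delta$ times the Tsallis entropy of the output spectrum, with the $\ell_1$ normalisation inherited from Eq.~\eqref{eq:maxexp} encoded as a mass constraint carrying a multiplier $\beta$. Writing the first-order optimality condition $\partial_\mu\!\left[d + \delta\Omega\right] + \beta \!=\! 0$ produces an equation in which, after inserting the target $\mu=\nu^{\eta}$, the KL part contributes $-\nu/\mu = -\mu^{1/\eta-1}$ while the Tsallis part contributes a term proportional to $\mu^{q-1}$. The crux is the entropic index: setting $q \!=\! 1/\eta$ makes the two exponents coincide, so the stationarity equation is satisfied \emph{identically} by $\mu=\nu^{\eta}$ and simultaneously forces $\beta=0$. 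This is precisely what certifies that the MaxExp(F) profile, and not some other spectrum, is the stationary point.

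With the profile fixed I would solve the same condition for the weight to read off $\delta$. Ignoring normalisation this gives $\delta = \eta-1 = \eta(1-\tfrac{1}{\eta})$; restoring the $\ell_1$ normalisation (so that $\sum_i(1\!-\!\lambda_i) = d\!-\!1 = s$ and $\sum_i(1\!-\!\lambda_i)^\eta = d\!-\!t' = t$ with $t' = \trace g(\mLambda;\eta)$) rescales $\nu$ by $s$ and $\mu$ by $t$, and propagating these factors through the optimality condition introduces exactly the $t^{1/\eta}/s$ factor, yielding the stated $\delta = \frac{\eta t^{1/\eta}}{s}(1-\frac1\eta)$. I would then confirm sufficiency: the KL term is convex in $\mu$, and with $q=1/\eta<1$ and $\delta\geq0$ I must check the combined second derivative stays nonnegative on the feasible domain so the stationary point is the global minimiser. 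Since the spectrum is thereby drawn toward the all-ones vector (the spectrum of the identity), this exhibits $g$ as a shrinkage operator with identity target.

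The step I expect to be the main obstacle is the exact bookkeeping of the normalisation that produces the $t^{1/\eta}/s$ factor, complicated by the \emph{self-referential} definition $t = d - t'$ with $t' = \trace g(\mLambda;\eta)$: because $\delta$ depends on the trace of the very solution it is meant to produce, I must verify the construction is consistent rather than circular, and must reconcile a single global $\delta$ (together with $\beta=0$) with the whole family of per-coordinate stationarity conditions. Verifying genuine global minimality despite the concave Tsallis term is the secondary delicate point.
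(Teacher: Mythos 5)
Your proposal is correct and follows essentially the same route as the paper's proof: pass to the normalized complement (residual) spectrum, pose KL divergence plus Tsallis entropy with matched index $q=\alpha=1/\eta$, and observe that the per-coordinate stationarity condition is satisfied identically by $\mu_i=\nu_i^{\eta}$ exactly when $\delta=\frac{\eta t^{1/\eta}}{s}\left(1-\frac{1}{\eta}\right)$ --- the paper runs the same computation in the opposite direction, solving $\partial f/\partial \lambda'_i=0$ for $\lambda'_i$ and then choosing $\delta$ so that the coefficient of $(1-\lambda_i)^{\eta}$ equals one. Your second-order/coercivity check and your explicit handling of the self-referential normalization $t=d-t'$ (treated as a fixed constant that the resulting minimizer then reproduces consistently) are sound refinements that go beyond the paper, which stops at the first-order condition.
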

\end{tcolorbox}
\end{minipage}
\begin{proof}
Let $d(\mLambda, \mLambda')\!=\!
D_{\text{KL}}\big({\mLambda^\circ}\|{{\mLambda^\circ}'}\big)
\!=\!-\big(\!\sum\limits_{i\in\idx{d}}\!\lambda^\circ_i\log{\lambda_i^\circ}'\big)\!+\!\big(\!\sum\limits_{i\in\idx{d}}\lambda^\circ_i\log{\lambda_i^\circ}\big)$, where $\mLambda^\circ\!=\!\frac{1-\mLambda}{s}$ and ${\mLambda^\circ}'\!=\!\frac{1-\mLambda'}{t}$ are complements of $\mLambda$ and $\mLambda'$ normalized by $s\!=\!d\!-\!1$ and $t\!=\!d\!-\!t'$ to obtain normalized distributions to ensure valid use of the Kullback-Leibler divergence and the Tsallis entropy. Let  $\Omega(\mLambda';\alpha)\!=\!\frac{1}{\alpha-1}(1\!-\!\sum\limits_{i\in\idx{d}}{\lambda^\circ_i}'^\alpha)$. Define $f(\mLambda,\mLambda')\!=\!-\big(\!\sum\limits_{i\in\idx{d}}\!\lambda^\circ_i\log{\lambda_i^\circ}'\big)\!+\!\big(\!\sum\limits_{i\in\idx{d}}\!\lambda^\circ_i\log{\lambda_i^\circ}\big)\!+\!\frac{1}{\alpha-1}(1\!-\!\!\sum\limits_{i\in\idx{d}}\!{\lambda^\circ_i}'^\alpha)$ (following Eq. \eqref{eq:shr}) 
 which we minimize \wrt $\mLambda'$ by computing $\frac{\partial f}{\partial\lambda'_i}\!=\!0$, that is,
%
\begin{align}
&\nonumber\\[-20pt]
& \label{eq:der}\frac{\partial f}{\partial\lambda'_i}\!=\!0\Rightarrow\lambda'_i\!=\!1\!-\!t\Big(\frac{\eta}{\delta s}\Big)^\eta (1\!-\!1/\eta)^\eta(1\!-\!\lambda_i)^{\eta},\\[-20pt]\nonumber
\end{align}
%
where $\frac{1}{\alpha}\!=\!\eta$. Solving $t\big(\frac{\eta}{\delta s}\big)^\eta(1\!-\!\frac{1}{\eta})^\eta\!=\!1$ for $\delta$  completes the proof.
\end{proof}
%


\begin{theorem}
The highly structured operator $\mF$ (target of  shrinkage) equals $\mIdent$.
\label{th:ident}
\end{theorem}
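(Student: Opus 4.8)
The plan is to identify the Ledoit--Wolf structured target $\mF$ inside the decomposition $(1\!-\!\delta)\mM\!+\!\delta\mF$ by reading off the matrix toward which the shrinkage solution of Theorem~\ref{th:shrink} drives the spectrum, and then to show that this matrix is exactly the identity. First I would recall from the proof of Theorem~\ref{th:shrink} that the operator acts purely spectrally: it leaves the eigenvectors $\mU$ untouched and maps each $\ell_1$-normalized eigenvalue to $\lambda'_i\!=\!g(\lambda_i;\eta)\!=\!1\!-\!(1\!-\!\lambda_i)^{\eta}$. Hence the whole question reduces to the behaviour of the scalar map $g(\cdot;\eta)$ on $[0,1]$, together with the way a common limiting eigenvalue recombines with $\mU$.

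Next I would examine this map as the shrinkage intensity grows. Since larger $\eta$ corresponds to stronger shrinkage (indeed $\delta\!=\!\frac{\eta t^{1/\eta}}{s}(1\!-\!1/\eta)$ grows without bound as $\eta\!\to\!\infty$), maximal shrinkage is the regime $\eta\!\to\!\infty$, in which $g(\lambda_i;\eta)\!\to\!1$ for every $\lambda_i\!\in\!(0,1]$, so all non-zero eigenvalues are pulled to the common value $1$. The decisive step is then the elementary observation that a symmetric positive semi-definite matrix whose non-zero eigenvalues all equal $1$ is $\mU\mIdent\mU^{\top}\!=\!\mU\mU^{\top}\!=\!\mIdent$ on its support, \emph{independently} of the eigenvectors. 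Thus the structured matrix toward which the estimator collapses under maximal shrinkage is the identity, which identifies $\mF\!=\!\mIdent$; for order-$r$ tensors the same spectral argument, applied on the super-diagonal extracted by $\diag(\cdot)$ in Eq.~\eqref{eq:tso_1}, replaces $\mIdent$ by the identity tensor $\tI_r$ of Eq.~\eqref{eq:tso1}.

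As corroboration I would check consistency with the variational characterisation, since the target of a shrinkage estimator is the point preferred by the regulariser $\Omega$ once the data term is switched off. With $\alpha\!=\!1/\eta\!\le\!1$ and $\delta\!\ge\!0$, minimizing the Tsallis-entropy term drives the complement distribution ${\mLambda^\circ}'\!=\!\frac{1-\mLambda'}{t}$ toward the configuration in which the eigenvalues $\lambda'_i$ concentrate at $1$ (equivalently $t'\!\to\!d$, $t\!\to\!0$ self-consistently), again recovering $\mIdent$. Matching this against $(1\!-\!\delta)\mM\!+\!\delta\mF$ pins the structured component to the identity.

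The hard part will be that the operator is \emph{nonlinear} in $\mM$, so $\mF$ cannot be extracted by a direct algebraic rearrangement of the linear form $(1\!-\!\delta)\mM\!+\!\delta\mF$ as in the classical Ledoit--Wolf setting; the identification must instead be made through the limiting / regulariser-preferred spectrum. The accompanying difficulty is bookkeeping: I must track the two normalisations ($\ell_1$ on $\mLambda$, trace on $\mM$) and the complement maps $\mLambda^\circ,{\mLambda^\circ}'$ so the KL term and the Tsallis entropy remain genuine divergences between valid distributions throughout the limit, and verify that a degenerate eigenvalue $\lambda_i\!=\!0$ (a fixed point of $g$) does not obstruct the conclusion on the support of $\mM$.
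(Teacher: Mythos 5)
Your proposal takes essentially the same route as the paper's own proof: identify the shrinkage target by letting $\eta\!\to\!\infty$, observe that $g(\lambda_i;\eta)\!=\!1\!-\!(1\!-\!\lambda_i)^\eta\!\to\!1$ on the $\ell_1$-normalized spectrum, and conclude the operator collapses to $\mU\mU^T\!=\!\mIdent$. Your added care about zero eigenvalues (the ``on its support'' caveat) and the corroborating Tsallis-regulariser argument go slightly beyond the paper's two-line proof, but the core identification is identical.
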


\begin{proof}
Notice   $\lim\limits_{\eta\rightarrow\infty} 1\!-\!(1\!-\!\lambda_i)^\eta\!=\!1$ if $\mLambda\!\neq\!0$ is the $\ell_1$-norm normalized spectrum from SVD, \ie, $\mU\mLambda\mU^T\!\!=\!\mM\!\succcurlyeq\!0$ with $\lambda_i\!:=\!\lambda_i/(\sum_{i'} \lambda_{i'}\!+\!\varepsilon),\,\varepsilon\!>\!0$. Thus, $\mU\mU^T\!\!=\!\mIdent$.
\end{proof}
\label{sec:en_det}

\algblock{while}{endwhile}
\algblock[TryCatchFinally]{try}{endtry}
\algcblockdefx[TryCatchFinally]{TryCatchFinally}{catch}{endtry}
	[1]{\textbf{except}#1}{}
\algcblockdefx[TryCatchFinally]{TryCatchFinally}{elsee}{endtry}
	[1]{\textbf{else}#1}{}
\algtext*{endwhile}
\algtext*{endtry}

\algblockdefx{ifff}{endifff}
	[1]{\textbf{if}#1}{}
\algtext*{endifff}

\algblockdefx{elseee}{endelseee}
	[1]{\textbf{else}#1}{}
\algtext*{endelseee}


\begin{algorithm}[t]
\caption{Tensor Shrinkage Operator with Exponentiation by Squaring, left part for even orders and right part for odd orders $r$.}
\label{code:exp_sqr}
\begin{minipage}{0.45\linewidth}{\fontsize{8}{2}{
{\bf Input:} $\tM$, $\eta\!\geq\!1$, $r\!=\!2,4,\cdots$
\begin{algorithmic}[1]
\State{$\tM^*_1\!\!=\!\tI_r\!-\!\tM,\; n\!=\!\text{int}(\eta),\;t\!=\!1,\;q\!=\!1$}
\while{ $n\!\neq\!0$:}
	\ifff{ $n\&1$:}
	\State \textbf{if }$t\!>\!1$: $\tG_{t+1}\!=\!\tG_{t}\!\times_{1,\cdots,r/2}\!\tM^*_{q},\!\!\!\!$\\$\qquad\quad$\textbf{else}: $\tG_{t+1}\!=\!\tM^*_{q}\!\!\!\!\!\!\!\!\!$
	\State $n\!\gets\!n\!-\!1,\;t\!\gets\!t\!+\!1$
	\endifff
	\State{$n\!\gets\!\text{int}(n/2)$}
	\ifff{ $n\!>\!0$:}
	\State $\tM^*_{q+1}\!=\tM^*_{q}\times_{1,\cdots,r/2}\tM^*_{q}$
	\State $q\!\gets\!q\!+\!1$
	\endifff
\endwhile
\end{algorithmic}
{\bf Output:} $\mygthreeehat{TSO}{\,\tM}\!=\!\tI_r\!-\!\tG_t$}}
\end{minipage}
\hspace{.8cm}
\begin{minipage}{0.45\linewidth}{\fontsize{8}{2}{
{\bf Input:} $\tM$, $\eta\!=3^0,3^1,3^2,\cdots$, $r\!=\!3,5,\cdots$
\begin{algorithmic}[1]
\State{$\tM^*_1\!\!=\!\tI_r\!-\!\tM,\; n\!=\!\text{int}(\eta),\;q\!=\!1$}
\while{ $n\!\neq\!0$:}
	\State{$n\!\gets\!\text{int}(n/3)$}
	\ifff{ $n\!>\!0$:}
	\State $\tM^*_{q+1}\!=\tM^*_{q}\times_{1,\cdots,\left\lfloor r/2 \right\rfloor}\tM^*_{q}$\\$\qquad\qquad\qquad\qquad\quad\times_{1,\cdots,\left\lceil r/2 \right\rceil}\tM^*_{q}$
	\State $q\!\gets\!q\!+\!1$
	\endifff
\endwhile
\end{algorithmic}
{\bf Output:} $\mygthreeehat{TSO}{\,\tM}\!=\!\tI_r\!-\!\tM^*_q$}}
\end{minipage}
\end{algorithm}

Note  $\tI_r\!-\!\left(\tI_r-\tM\right)^\eta\!\rightarrow\!\tI_r$ if $\eta\!\rightarrow\!\infty$. Thus, for sufficiently large  $1\!\ll\!\eta\!\ll\!\infty$, the diffused heat reverses towards the super-diagonal, where the majority of the signal should concentrate. Thus, we limit the number of coefficients of feature representations by extracting the super-diagonals from the TSO-processed $\tM^{(r)}$ as in Eq.~\ref{eq:tso_1}, where $r$ is the order of HoTD $\tM$. In our experiments, $r\!=\!2,3,4$.

As super-diagonals contain information obtained by multiplying $\eta$ times the complement $1\!-\!\lambda_i$ (spectral domain), 
TSO can be seen as $\eta\!-\!1$ aggregation steps along the tensor product mode(s). THus, we pass $\hat{\vpsi}_r$ via the element-wise SigmE from Eq.~\eqref{eq:sigme1} (as in Eq.~\eqref{eq:tso_2}) to detect the presence of at least one feature being detected in $\hat{\vpsi}_{r}$ after such an aggregation. 
For brevity, we drop subscript $r$ of $\vpsi_r$. 

\paragraph{Complexity.} For integers $\eta\!\geq\!2$ and even orders $r\!\geq\!2$, computing $\eta\!-\!1$ tensor-tensor multiplications $\big(\tI_r-\tM^{(r)}\big)^\eta$ has
the complexity $\bigoh\big(d^{\frac{3}{2}r}\eta\big)$. For odd orders $r\!\geq\!3$, due to alternations between multiplications in $\left\lfloor \frac{r}{2}\right\rfloor$ and $\left\lceil \frac{r}{2}\right\rceil$ modes,  the complexity is  $\bigoh\big(d^{\left\lfloor\frac{r}{2}\right\rfloor}d^{2\left\lceil\frac{r}{2}\right\rceil}\eta\big)\!\approx\!\bigoh\big(d^{\frac{3}{2}r}\eta\big)$. Thus, the complexity of Eq. \eqref{eq:tso1} \wrt integer $\eta\!\geq\!2$ scales linearly. However, for even orders $r$, one can readily replace $\big(\tI_r-\tM^{(r)}\big)^\eta$ with exponentiation by squaring \cite{expsquare}, whose cost is $\log({\eta})$. This readily yields the sublinear complexity $\bigoh\big(d^{\frac{3}{2}r}\log({\eta})\big)$ \wrt $\eta$. 

\paragraph{Implementation of TSO.}  
Algorithm \ref{code:exp_sqr} 
shows  fast TSO for even/odd orders $r$. We restrict the odd variant to $r\!=\!3^0,3^1,3^2,\cdots$ for brevity. 
Finally, we note that matrix-matrix and tensor multiplications with cuBLAS are highly parallelizable so the $d^{\frac{3}{2}r}$ part of complexity can be reduced in theory even to $\log(d)$. 

\paragraph{TENET-RPN.} To extract query RoIs, we firstly generate a set $\tZ$ of TSO representations $\vPsi\equiv\{\vpsi_z\}_{z\in\idx{Z}}$ from the support images, with $Z=|\tZ|$.
We then perform cross-attention between $\vPsi \in \mbr{d\times Z}$ and the $N$ feature vectors $\vPhi^*\! \in \mbr{d\times N}$ extracted from the query image. The attention input $\mQ$ is then generated from $\vPhi^*$, while $\mK$ and $\mV$ are both generated from $\vPsi$. The output of the transformer block in Eq.~\ref{eq:atten} is fed into an RPN layer to output a set $\tB$ of $B=|\tB|$ query RoIs.

Query RoIs are represented by spatially ordered features $\{\vPhi^*_b\}_{b\in\idx{B}}$ and TSO orderless features $\{\vpsi^*_b\}_{b\in\idx{B}}$,  passed to the   Transformer Relation Head. 

\subsection{Transformer Relation Head}
TRH, in Fig.~\ref{fig:embeddings},  models relations between support crops and query RoIs.
%
%
%
%
TSO representations are derived from features of layer 4 of ResNet-50, leading to a channel dimension $d=1024$.
Spatially ordered representations of  support features $\{\vPhi_z\in\mbr{2d\times N}\}_{z\in\idx{Z}}$ and query RoIs $\{\vPhi^*_b\in\mbr{2d\times N}\}_{b\in\idx{B}}$ are both extracted from layer 5 of ResNet-50, leading to a channel dimension $2d=2048$. 

They are then fed into 2 different transformers: (i) a \textbf{$Z$-shot transformer head}, which performs cross-attention between globally-pooled representations of the query images and support images; and (ii) a \textbf{Spatial-HOP transformer head}, which performs self-attention between 
spatially ordered representations and spatially orderless high-order representations for a given image. 

\paragraph{\textbf{$Z$-shot} transformer head}\label{z-shot-head}
 consists of a cross-attention layer formed with:
%
%
\begin{align}
&\left(\begin{array}{l}
\mathbf{Q}\\
\mathbf{K}\\
\mathbf{V}
\end{array}\right)\!=\!\left(
\begin{array}{l}
[\vq_1,\cdots,\vq_B]\\
\text{$[\vk_1,\cdots,\vk_Z]$}\\
\text{$[\vv_1,\cdots,\vv_Z]$}
\end{array}\right)
\quad\text{where}\quad
\left(
\begin{array}{l}
\vq_b\\
\vk_z\\
\vv_z
\end{array}\right)\!=\!
\left(
\begin{array}{l}
\mW^{(q)}\big(\bar{\vphi}^*_b + \mW^{(p)}\vpsi^*_{b}\big)\\
\mW^{(k)}\big(\bar{\vphi}_z + \mW^{(p)}\vpsi_{z}\big)\\
\mW^{(v)}\big(\bar{\vphi}_z +\mW^{(p)}\vpsi_{z}\big)
\end{array}
\right).
\label{eq:zsa_q}
\end{align}
%
%
Moreover, $\bar{\vphi}^*_b$ and $\bar{\vphi}_z$ are average-pooled features $\frac{1}{N}\vPhi^*_b\mathbf{1}$ and $\frac{1}{N}\vPhi_z\mathbf{1}$, respectively. 
The matrices $\mW^{(q)}\in\mbr{2d\times 2d}$, $\mW^{(k)}\in\mbr{2d\times 2d}$, $\mW^{(v)}\in\mbr{2d\times 2d}$, and $\mW^{(p)}\in\mbr{2d\times d}$ is a linear projection mixing spatially orderless TSO representations  with spatially ordered representations. 
Thus, each attention query vector $\vq_b$ combines the extracted spatially orderless TSO representations  with spatially ordered representations for a given query RoI. Similarly, each key vector $\vk_z$ and value vector $\vv_z$ combine such two types of representations for a support crop. 

\paragraph{Spatial-HOP transformer head}\label{spatial-hop head}
 consists of a layer that performs self-attention on  spatially orderless TSO representations  and spatially ordered representations, extracted either from $Z$ support crops, or $B$ query RoIs. Below we take one support crop as an example. 
For the set $\tZ$, we compute ``Spatial'', a spatially ordered $Z$-averaged representation   $\vPhi^{\dagger}=\frac{1}{Z}\sum_{z\in\idx{Z}}\vPhi_z\in\mbr{2d\times N}$. We also compute HO, a spatially orderless High-Order $Z$-pooled representation  $\vpsi^{\dagger}=\frac{1}{Z}\sum_{z\in\idx{z}}\vpsi_z\in\mbr{d}$.
We split $\vPhi^{\dagger}$ along the channel mode of dimension $2d$ to create two new matrices $\vPhi^{\dagger u}\in\mbr{d\times N}$ and $\vPhi^{\dagger l}\in\mbr{d\times N}$. 
We set $\vPhi^{\dagger l}=[\vphi^{\dagger l}_{1},\cdots,\vphi^{\dagger l}_{N}]\in\mbr{d}$. Also, we form FO (from $\vPhi^{\dagger u}$), a spatially orderless First-Order $Z$-pooled representation.  Self-attention is then performed over the matrix of token vectors:  
\begin{equation}
\tT=[\vphi^{\dagger l}_{1},\cdots,\vphi^{\dagger l}_{N}, \bar{\vphi}^{\dagger u}, \mW^{(g)}\vpsi^{\dagger}],
\label{eq:shop_q}
\end{equation}
where $\mW^{(g)}\in\mbr{d\times d}$ a linear projection for HO. $\tT$ is projected onto the query, key and value linear projections. Spatial-HOP attention captures relations among spatially-aware first-order and spatially orderless high-order representations. 
The outputs of  $Z$-shot and Spatial-HOP transformer heads are  combined and fed into the classifier/bounding-box regressor. 
See \S \ref{sec:TRH} of \textbf{Suppl. Material} for details.



\subsection{Pipeline Details (Figure \ref{fig:embeddings} ({\em bottom}))} \label{pipe-det}

\paragraph{HOP unit}  uses  $\circledcirc$ to split the channel mode into groups (\eg, 2:1:1 split means two parts of the channel dimension are used to form $\tM^{(2)}$, one part to form $\tM^{(3)}$, and one part to form $\tM^{(4)}$). TSOs with parameters $\eta_2\!=\!\cdots\!=\!\eta_r\!=\!\eta$ are applied for orders $r\!=\!2,\cdots,r$, and diagonal entries are extracted from each tensor and concatenated by $\odot$. Element-wise SigmE with  $\eta'_2\!=\!\cdots\!=\!\eta'_r\!=\!\eta'$ is applied.

\paragraph{``Orderless FO, Spatial, Orderless HO'' block} combines the First-Order (FO), spatial and High-Order (HO) representations. Operator ``Z-avg'' performs average pooling along $Z$-way mode, operator ``Sp-avg'' performs average pooling along the spatial modes of feature maps,  operator $\circledcirc$ simply splits the channel mode into two equally sized groups (each is half of the channel dimension), and operator $\circledast$ performs concatenation of FO, spatial and HO representations along the spatial mode of feature maps, \eg, we obtain $N\!+\!2$ fibers times 1024 channels.

\paragraph{Z-shot T-RH} is a transformer which performs attention on individual Z-shots. The spatial representation $\mPhi$ is average pooled along spatial dimensions by ``Sp-avg'' and combined with high-order $\mPsi$ (passed by a FC layer) via addition $\oplus$. Another FC layer follows and subsequently the value, key and query matrices are computed,  an RBF attention formed. Operator $\bullet$ multiplies the value matrix with the  RBF matrix. Head is repeated $T$ times, outputs concatenated by $\odot$. 

\paragraph{Spatial-HOP T-RH} takes  inputs from the ``Orderless FO, Spatial, Orderless HO'' block, and computes the value,  key and  query matrices. 
The attention matrix (RBF kernel) has $(N\!+\!2)\!\times\!(N\!+\!2)$ size, being composed of spatial, FO-spatial and HO-spatial attention. After multiplying the attention matrix with the value matrix, we extract the spatial, FO and HO representations. Support and query first-order representations (FO) (and high-order representations (HO)) are element-wisely multiplied by $\bullet$ (multiplicative relationship). 
Finally,  support and query spatial representations use the subtraction operator $\ominus$. After the concatenation of FO and HO relational representations by $\odot$, passing via an MLP (FC+ReLU+ FC), and concatenation with the spatial relational representations, we get one attention block, repeated $T$ times.


\section{Experiments}
\label{sec:exp}

\paragraph{Datasets and Settings}. 
For  PASCAL VOC 2007/12 \cite{voc}, we adopt the 15/5 base/novel category split setting and use training/validation sets from PASCAL VOC 2007 and 2012 for training, and the testing set from PASCAL VOC 2007 for testing, following \cite{f10ObjectDetection,fsod,accv,persampel}. For MS COCO \cite{mscoco}, we follow \cite{f12}, and adopt the 20 categories that overlap with PASCAL VOC as the novel categories for testing, whereas the remaining 60 categories  are used for training. For the FSOD dataset \cite{fsod}, we split its 1000 categories into 800/200 for training/testing.

\paragraph{Implementation Details.} TENET uses ResNet-50 pre-trained on ImageNet \cite{imagenet} and MS COCO \cite{mscoco}. 
We fine-tune the network with a learning rate of 0.002 for the first 56000 iterations and 0.0002 for another 4000 iterations. 
Images are resized to 600 pixels (shorter edge) and the longer edge is capped at 1000 pixels.  Each support image is cropped based on ground-truth boxes, bilinearly interpolated and padded to $320\!\times\!320$ pixels. 
We set, via cross-validation) SigmE parameter $\eta'\!\!=\!\!200$ and TSO parameters $\eta_2\!=\!\eta_3\!=\!\eta_4\!=\!7$. 
We report standard metrics for FSOD, namely $mAP$, $AP$, $AP_{50}$ and $AP_{75}$.

\definecolor{LightCyan}{rgb}{0.88,1,0.88}

\setlength{\tabcolsep}{0.8pt}
\begin{table}[t]
\fontsize{7}{7}\selectfont  
\caption{Evaluations (mAP \%) 
on three splits of the VOC 2007 testing set.}
\label{VOCsplit}
\hspace{-0.5cm}
\begin{tabular}{ll|cccc|cccc|cccc|cccc}
\toprule
\multicolumn{2}{c}{\multirow{2}{*}{Method/Shot}}&  
\multicolumn{4}{c}{Split 1}&  
\multicolumn{4}{c}{Split 2}&  
\multicolumn{4}{c}{Split 3}&
\multicolumn{4}{c}{Mean}\\
\cmidrule{3-18}
&\multicolumn{1}{c}{}&  1 &3 &5&\multicolumn{1}{c}{10}& 1 &3 &5&\multicolumn{1}{c}{10}& 1 &3 &5&\multicolumn{1}{c}{10}& 1 &3 &5&\multicolumn{1}{c}{10}\\
\midrule
FRCN &ICCV12  & 11.9&29.0&36.9&36.9&5.9&23.4&29.1&28.8&5.0&18.1&30.8&43.4&7.6&23.5&32.3&36.4\\ 
FR &ICCV19 &14.8 &26.7 &33.9& 47.2& 15.7 &22.7 &30.1 &39.2 &19.2 &25.7& 40.6& 41.3&16.6&25.0&34.9&42.6\\
Meta& ICCV19 &19.9 &35.0& 45.7 &51.5& 10.4 &29.6 &34.8 &45.4 &14.3& 27.5& 41.2& 48.1&14.9&30.7&40.6&48.3\\
FSOD& CVPR20&29.8 &36.3&48.4 &53.6& 22.2 & 25.2&31.2 &39.7&24.3& 34.4&47.1& 50.4&25.4&32.0&42.2&47.9\\
NP-RepMet& NeurIPS20&37.8 &41.7 &47.3 &49.4 &{\bf 41.6} &43.4 &47.4 &49.1& 33.3& 39.8 & 41.5 &44.8&37.6&41.6&45.4&47.8\\

PNSD &ACCV20& 32.4&39.6&50.2 &55.1&30.2 &30.3 &36.4 &42.3 &30.8 &38.6 &46.9 &52.4 &31.3&36.2&44.5&49.9\\
MPSR &ECCV20 &41.7 &51.4 &55.2 &61.8& 24.4 &39.2& 39.9 &47.8& {\bf 35.6} & 42.3 &48.0 &49.7&33.9&44.3&47.7&53.1\\
TFA &ICML20& 39.8&44.7 &55.7 &56.0  &23.5  &34.1 &35.1 &39.1 &30.8 &42.8 &49.5&49.8&31.4&40.5&46.8&48.3\\
FSCE &CVPR21& 44.2&51.4 & 61.9 &63.4  &27.3  &43.5 &44.2 &50.2 &22.6 &39.5 &47.3&54.0&31.4&44.8&51.1&55.9\\
CGDP+FRCN &CVPR21& 40.7&46.5 &57.4 &62.4  &27.3  &40.8 &42.7 &46.3 &31.2 &43.7 &50.1&55.6&33.1&43.7&50.0&54.8\\
TIP &CVPR21& 27.7&43.3 &50.2 &56.6  &22.7  &33.8 &40.9 &46.9 &21.7 &38.1 &44.5&50.9&24.0&38.4&45.2&52.5\\
FSOD\textsuperscript{up}&ICCV21& 43.8&50.3 & 55.4 &61.7  &31.2  &41.2 &44.2 &48.3 &35.5 &43.9 &50.6&53.5&36.8&45.1&50.1&54.5\\
QSAM&WACV22& 31.1&39.2 & 50.7 &59.4  &22.9  &32.1 &35.4 &42.7 &24.3 &35.0 &50.0&53.6&26.1&35.4&45.4&51.9\\
\midrule
\rowcolor{LightCyan}TENET&(Ours)&{\bf 46.7} &{\bf 55.4} &\bf62.3 &{\bf 66.9} &40.3 &\bf44.7 &{\bf 49.3} &  \bf{52.1} &35.5 &{\bf 46.0} &{\bf 54.4} & \bf54.6 &{\bf 40.8}&{\bf 48.7}&{\bf55.3}&\bf{57.9}\\
\bottomrule
\end{tabular}
\end{table}

\subsection{Comparisons with the State of the Art}
\paragraph{PASCAL VOC 2007/12.} We compare our method to QSAM\cite{persampel}, FSOD\textsuperscript{up} \cite{fsodup}, CGDP+FRCN \cite{cgdp}, TIP \cite{tip}, FSCE \cite{FSCE}, TFA \cite{TFA}, Feature Reweighting (FR) \cite{f10ObjectDetection}, LSTD \cite{f9LSTD}, FRCN \cite{re32}, NP-RepMet \cite{neg}, MPSR\cite{multi}, PSND \cite{accv} and FSOD \cite{fsod}. 
Table \ref{VOCsplit} shows that our TENET  outperforms FSOD  by a 7.1--15.4\% margin. For the 1- and 10-shot  regime, we outperform QSAM \cite{persampel}  by $\sim$14.7\%. 
%

\paragraph{MS COCO.} Table \ref{coco} compares TENET with QSAM\cite{persampel}, FSOD\textsuperscript{up}  \cite{fsodup},  CGDP+ FRCN \cite{cgdp}, TIP \cite{tip}, FSCE \cite{FSCE}, TFA \cite{TFA}, FR \cite{f10ObjectDetection}, Meta R-CNN \cite{f12}, FSOD \cite{fsod} and PNSD\cite{accv} on  the MS COCO minival set (20 novel categories, 10-shot protocol). Although MS COCO is more challenging in terms of complexity and the dataset size, TENET  boosts results to 19.1\%, 27.4\% and 19.6\%, surpassing the  SOTA method QSAM by 6.1\%, 2.7\% and 7.5\% on $AP$, $AP_{50}$ and $AP_{75}$. 

\paragraph{FSOD.} Table \ref{fsod} compares TENET (5-shot) with PNSD \cite{accv}, FSOD \cite{fsod}, LSTD \cite{f9LSTD} and LSTD (FRN \cite{re32}). We re-implement BD\&TK, modules of LSTD, based on Faster-RCNN for fairness. TENET yields SOTA 35.4\% $AP_{50}$ and 31.6\% $AP_{75}$. 
\begin{table}[t]
\makeatletter\def\@captype{table}\makeatother\caption{Evaluations on the MS COCO minival set (\ref{coco}) and FSOD testset (\ref{fsod}).}
\centering
\begin{subfigure}[t]{0.52\linewidth}{
\centering
\fontsize{7}{6.0}\selectfont  
\centering
\setlength{\tabcolsep}{1pt}{
\begin{tabular}{cccccc}
\toprule
Shot & \multicolumn{2}{c}{ Method} &$AP$&$AP_{50}$&$AP_{75}$\\
\midrule
\multirow{9}{*}{10}& LSTD&AAAI18 & 3.2	& 8.1&2.1 \\

& FR & ICCV12& 5.6	& 12.3&4.6 \\

&Meta&ICCV19 &8.7	&19.18	&6.6 \\
&MPSR&ECCV20 &9.8&17.9&9.7\\
&FSOD &CVPR20&  11.1 & 20.4 & 10.6\\
&PNSD&ACCV20 &  12.3 & 21.7 & 11.7\\
&TFA&ICML20 &  9.6 & 10.0 & 9.3\\
&FSCE&CVPR21 &  10.7& 11.9 & 10.5\\
&CGDP+FRCN&CVPR21 &  11.3& 20.3 & 11.5 \\
&FSOD\textsuperscript{up}&ICCV21 &  11.6& 23.9 & 9.8\\
&QSAM&WACV22 &  13.0&24.7 & 12.1\\
\midrule
\rowcolor{LightCyan}&TENET&(Ours)&\textbf{19.1} & \textbf{27.4} & \textbf{19.6}\\
\bottomrule
\end{tabular}}}
\caption{\label{coco}}
\end{subfigure}
%
\begin{subfigure}[t]{0.41\linewidth}{
\fontsize{7}{12.4}\selectfont 
\centering
\setlength{\tabcolsep}{2.8pt}{
\begin{tabular}{ccccc}
\toprule
Shot &  \multicolumn{2}{c}{ Method}  &$AP_{50}$&$AP_{75}$\\
\midrule
\multirow{4}{*}{5}& $\substack{\text{LSTD}\\\text{(FRN)}}$ &AAAI18& 23.0	& 12.9 \\
&LSTD&AAAI18 &24.2	&13.5 \\
&FSOD&CVPR20&  27.5 & 19.4 \\
&PNSD&ACCV20 &  29.8 & 22.6\\
&QSAM&WACV22 &  30.7&25.9 \\
\midrule
\rowcolor{LightCyan}&TENET&(Ours)&\textbf{35.4} & \textbf{31.6} \\
\bottomrule
\end{tabular}}}
\caption{\label{fsod}}
\end{subfigure}
\end{table}

\subsection{Hyper-parameter and ablation analysis}
\label{sec:ablations}

\noindent{\textbf{TENET.}} Table \ref{Multi-order} shows that among orders $r\!=\!2$, $r\!=\!3$ and $r\!=\!4$,  variant $r\!=\!2$ is the best. 
We next consider pairs of orders, and the triplet $r\!=\!2,3,4$. As the number of tensor coefficients  grows quickly \wrt $r$, we split the 1024 channels into groups, \eg, $r\!\!=2,3$. A 3:1 split means that second- and third-order tensors are built from $768$ and $256$ channels ($768\!+\!256\!=\!1024$). We report only the best splits. For pairs of orders, variant $r\!=\!2,3$ was the best.  Triplet $r\!=\!2,3,4$, the best performer, outperforms $r\!=\!2$ by 5.8\% and 2.7\% in novel classes (5- and 10-shot), and 2.1\% and 2.4\% in base classes. As all representations are 1024-dimensional, we conclude that multi-order variants are the most informative.

\begin{table}[t]
\makeatletter\def\@captype{table}\makeatother\caption{Results on VOC2007 testset for applying TENET in RPN or TRH (\ref{Multi-order}, top panel of \ref{STF}). TRH ablation shown in bottom panel of \ref{STF}. }
\centering
\begin{subfigure}[b]{0.48\linewidth}{
\centering
\fontsize{6}{9}\selectfont  
\setlength{\tabcolsep}{1pt}
{\begin{tabular}{ccc|c|cc cc c}
\toprule
\multicolumn{3}{c|}{$r$} &\multirow{2}{*}{$\substack{\text{dim.}\\\text{\fontsize{4}{1}\selectfont split}}$}& \multicolumn{2}{c}{Shot(Novel)} & \multicolumn{2}{c}{Shot(Base)} & Speed \\
\cmidrule{1-3}
\cmidrule{5-9}
2&3&4&~&5&10&5&10&(img/ms)\\
\midrule
\checkmark&~&~&~&56.5&64.2&71.7&75.5&32\\
~&\checkmark&~&~&55.7&63.2&67.0&72.1&69\\
~&~&\checkmark&~&51.4&58.9&68.7&74.8&78\\
\rowcolor{LightCyan}\checkmark&\checkmark&~&{\bf 3:1}&58.3&63.2&69.3&75.1&42\\
\checkmark&~&\checkmark&3:1&56.1&62.4&70.8&75.4&68\\
&\checkmark&\checkmark&2:2&51.8&61.7&68.1&73.6&71\\
\midrule
\multirow{6}{*}{\checkmark}&\multirow{6}{*}{\checkmark}&\multirow{6}{*}{\checkmark}&6:1:1&53.6&62.7&69.4&72.8&\\
\rowcolor{LightCyan}&&&{\bf 5:2:1}&\bf 62.3&\bf 66.9&\bf 73.8&\bf 77.9&59\\
&&&5:1:2&53.9&63.1&69.7&73.3&\\
&&&4:2:2&61.4&65.0&70.4&74.9&\\
&&&4:3:1&59.1&63.6&71.8&75.2&\\
&&&4:1:3&61.0&64.1&68.9&72.5&\\
\bottomrule
\end{tabular}}}
\caption{\label{Multi-order}}
\end{subfigure}
\begin{subfigure}[b]{0.43\linewidth}
\centering
{\fontsize{6}{10}\selectfont  
\setlength{\tabcolsep}{3.2pt}{
\begin{tabular}{c|c|c|cc cc}
\toprule
\multirow{2}{*}{}&RPN &TRH& \multicolumn{2}{|c}{Shot(Novel)} & \multicolumn{2}{c}{Shot(Base)} \\
\cmidrule{2-7}
&\multicolumn{2}{c|}{$r$}&5&10&5&10\\
\midrule
a&1& 1&53.4&61.8&64.9&72.1\\
b&2,3,4&1&57.2&63.7&68.8&76.6\\
c&2,3,4&2,3,4&61.0&65.4&71.3&77.3\\
\midrule
\rowcolor{LightCyan}d&2,3,4&1,2,3,4&\bf 62.3&\bf 66.9&\bf 73.8&\bf 78.2\\
\bottomrule
\end{tabular}}
\fontsize{6}{10}\selectfont  
\setlength{\tabcolsep}{1.4pt}{
\begin{tabular}{c|c|cc cc}
\toprule
\multicolumn{2}{c}{TRH}& \multicolumn{2}{|c}{Shot(Novel)} & \multicolumn{2}{c}{Shot(Base)} \\
\midrule
$Z$-shot &Spatial-HOP &5&10&5&10\\
\midrule
\checkmark&&58.5&63.2&69.3&75.1\\
&\checkmark&61.0&65.8&71.7&76.5\\
\rowcolor{LightCyan}\checkmark&\checkmark&\bf 62.3&\bf 66.9&\bf 73.8&\bf 78.2\\
\bottomrule
\end{tabular}}}
\caption{\label{STF}}
\end{subfigure}
\vspace{-0.6cm}
\end{table}

\paragraph{TSO.}
Based on the best channel-wise splits in Table \ref{Multi-order}, we study the impact of  $\eta_r$ (shrinkage/decorrelation) of TSO to verify its effectiveness. Figure \ref{sp} shows mAP \wrt the individual $\eta_2,\eta_3$ and $\eta_4$ for $r\!=\!2$, $r\!=\!3$ and $r\!=\!4$. 
We then investigate the impact of $\eta_r$ on pairwise representations, where we set the same $\eta_r$ for pairwise variants, \eg,  $\eta_2=\eta_3$. Again, the same value of $\eta_r$ is used for triplet $r\!=\!2,3,4$. 
Note that for $\eta_r\!=\!1$, TSO is switched off and all representations reduce to the polynomial feature maps in So-HoT \cite{me_domain}. As shown in Figure \ref{sp}, TSO is very beneficial ($\sim$ 5\% gain for triplet $r\!=2,3,4$ over not using TSO). 

\paragraph{TRH.} Below we investigate the impact of $Z$-shot and Spatial-HOP T-RHs on results. Table \ref{STF} (bottom) shows that both heads
are highly complementary. 

\paragraph{Other hyperparameters.}
We start by varying $\sigma$ of the RBF kernel from 0.3 to 3. Fig \ref{sigma} shows that $\sigma\!=\!0.5$ gives the best  result.
We now fix $\sigma$ and investigate the impact of varying the number of heads used in T-Heads Attention ($TA$). Table \ref{head} shows best performance with $TA\!=\!4$.
Lastly, we vary the number of TENET blocks ($TB$). Table \ref{block} shows that results are stable especially if $TB\geq 2$.
Unless otherwise noted, $TA=2$ and $TB=4$, respectively, on VOC dataset.  See \S \ref{sup_more} of \textbf{Suppl. Material} for more results on FSOD and MS COCO.


\paragraph{Impact of TENET on RPN and TRH.} Table \ref{STF} shows ablations \wrt TENET variants in: 1) either RPN or TRH, or 2) both RPN and TRH. 
Comparing results for settings a,b, and c confirms that using second-, third- and fourth-orders simultaneously benefits both RPN and TRH, achieving 3.8\%/1.9\% as well as 3.8\%/1.8\% improvement on novel classes over the first-order-only variant. 
Results for settings c and d show that TRH  encodes better the information carried within regions if leveraging both first- and higher-order representations.
%
%
%
\begin{figure}[t]
\centering
\begin{subfigure}[t]{0.49\linewidth}
\centering
\includegraphics[trim=24 23 25 25, clip=true, width=5.cm]{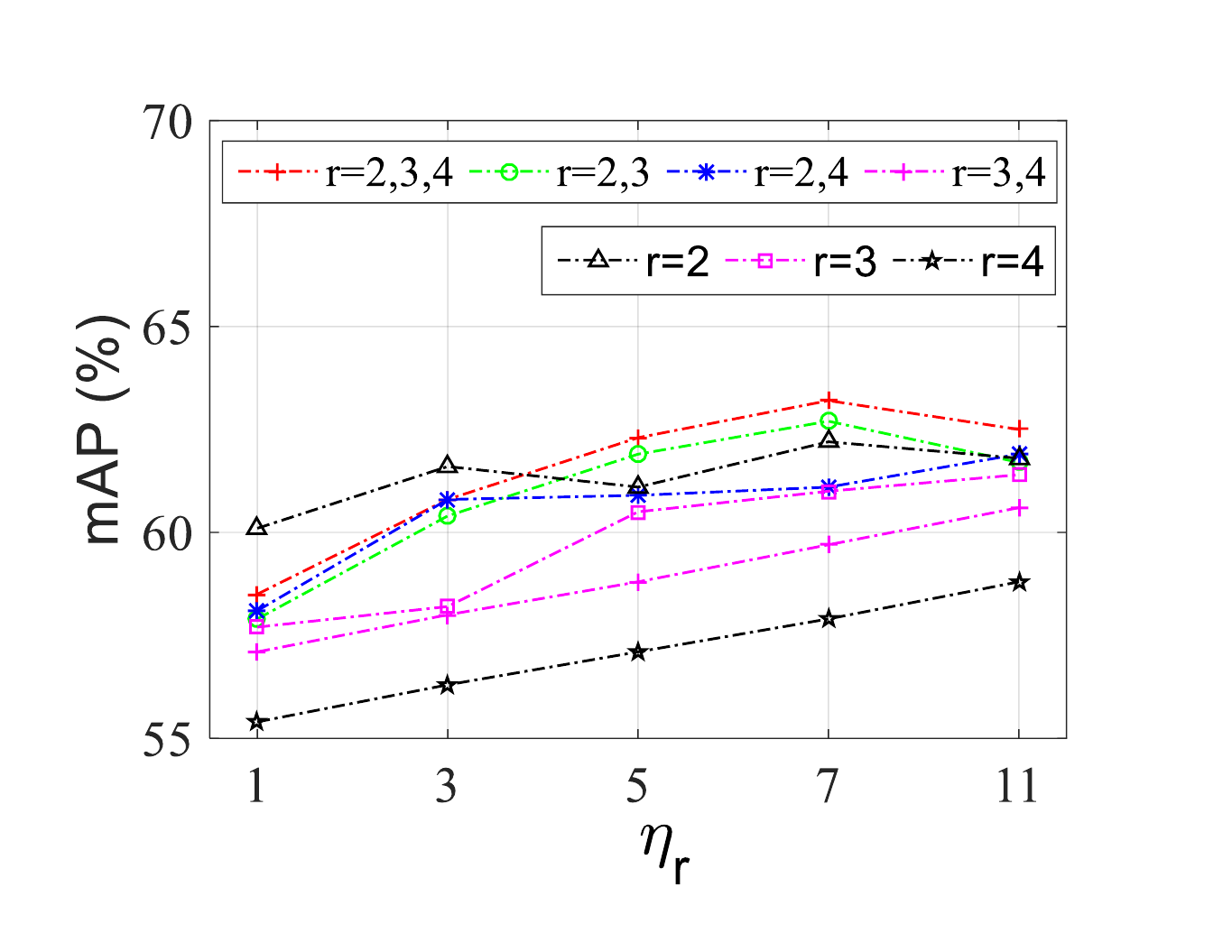}\caption{\label{sp}}
\end{subfigure}
\begin{subfigure}[t]{0.49\linewidth}
\centering
\includegraphics[trim=24 24 25 25, clip=true, width=5.cm]{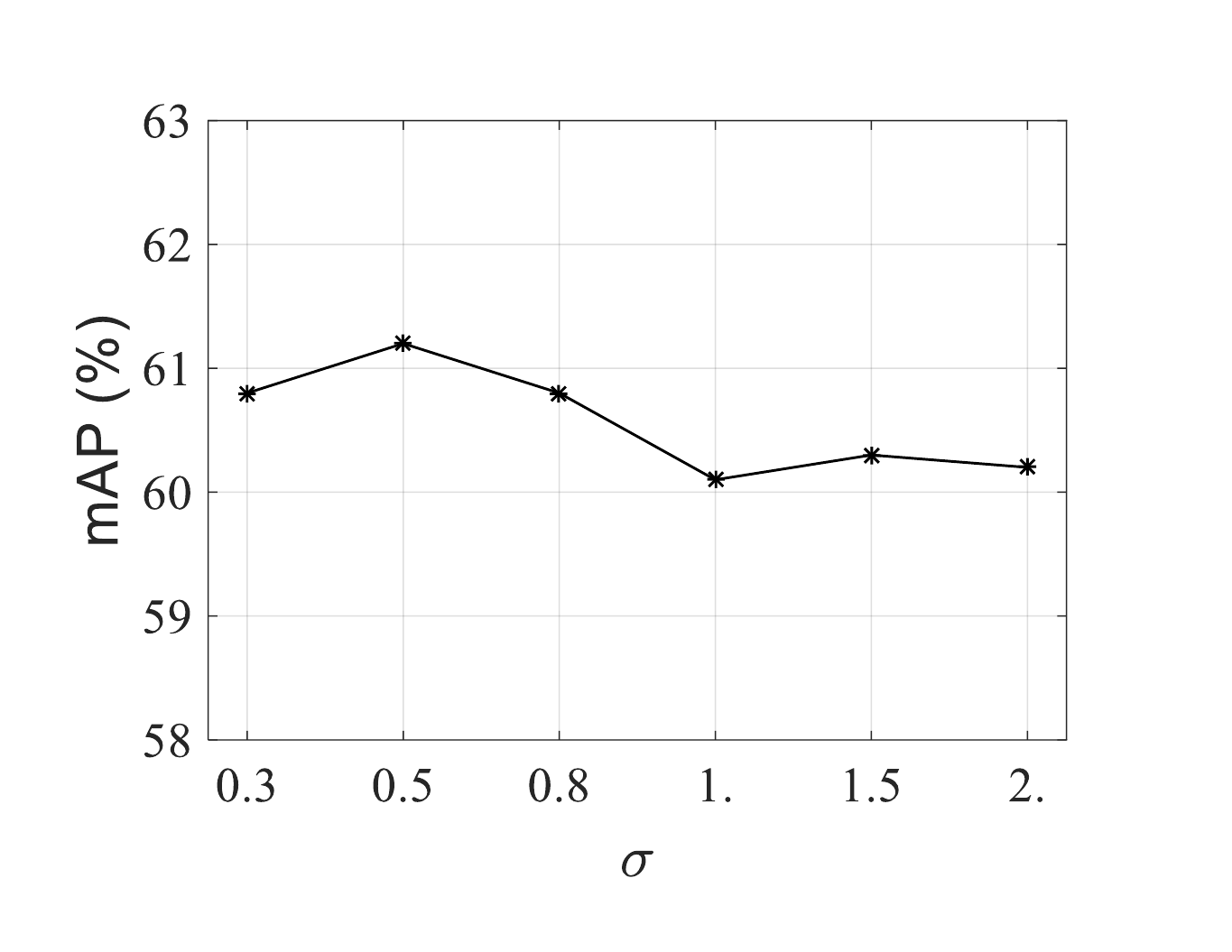}\caption{\label{sigma}}
\end{subfigure}
\caption{mAP (VOC2007 dataset, novel classes, 10-shot) \wrt varying $\eta_r$ in TSO (Fig. \ref{sp}) and the $\sigma$ of   RBF kernel in self-attention   (Fig. \ref{sigma}).}
\label{sp_pn}
\end{figure}
%
%
\newcolumntype{a}{>{\columncolor{LightCyan}}c}
\begin{table}[t]
\makeatletter\def\@captype{table}\makeatother\caption{
Effect of varying (a) group within MHA in Tab. \ref{head} and (b) TENET block in Tab.  \ref{block}  on PASCAL VOC 2007 (5/10-shot, novel classes). When varying TENET block, group number is fixed to 4 (best value).}
\centering
\begin{subfigure}[t]{0.54\linewidth}
\centering
\fontsize{8}{10}\selectfont  
\setlength{\tabcolsep}{2pt}{
\begin{tabular}{c c|ccccccc}
\toprule
\multicolumn{2}{c|}{$TA$}&1&2& \cellcolor{LightCyan}4&8&16&32&64\\
\arrayrulecolor{LightCyan}
\addlinespace[-3pt]
\cmidrule[4pt]{5-5}
\addlinespace[-3pt]
\arrayrulecolor{black}
\cline{1-9}
\arrayrulecolor{LightCyan}
\addlinespace[-1pt]
\cmidrule[2pt]{5-5}
\addlinespace[-3pt]
\arrayrulecolor{black}
\multirow{2}{*}{$\substack{\text{Shot}\\\text{\fontsize{4}{1}\selectfont(Novel)}}$}&5&58.3&59.1& \cellcolor{LightCyan} \bf61.8&60.5&58.4&58.5&56.0\\
%
\arrayrulecolor{LightCyan}
\addlinespace[-3pt]
\cmidrule[4pt]{5-5}
\addlinespace[-3pt]
\arrayrulecolor{black}
\cline{2-9}
\arrayrulecolor{LightCyan}
\addlinespace[-1pt]
\cmidrule[2pt]{5-5}
\addlinespace[-3pt]
\arrayrulecolor{black}
&10&61.2&62.8&\cellcolor{LightCyan}\bf65.8&64.2&61.2&61.7&60.4\\
\addlinespace[1pt]
\bottomrule
\end{tabular}
\caption{\label{head}}}
\end{subfigure}
\begin{subfigure}[t]{0.44\linewidth}
\centering
\fontsize{8}{10}\selectfont  
\setlength{\tabcolsep}{3pt}{
\begin{tabular}{c c|ccccc}
\toprule
\multicolumn{2}{c|}{$TB$}&1&\cellcolor{LightCyan}2&3&4&5\\
%
\arrayrulecolor{LightCyan}
\addlinespace[-3pt]
\cmidrule[4pt]{4-4}
\addlinespace[-3pt]
\arrayrulecolor{black}
\cline{1-7}
\arrayrulecolor{LightCyan}
\addlinespace[-1pt]
\cmidrule[2pt]{4-4}
\addlinespace[-3pt]
\arrayrulecolor{black}
\multirow{2}{*}{$\substack{\text{Shot}\\\text{\fontsize{4}{1}\selectfont(Novel)}}$}&5&61.8&\cellcolor{LightCyan}\bf 62.3&62.1&61.8&61.9\\
%
%
\arrayrulecolor{LightCyan}
\addlinespace[-3pt]
\cmidrule[4pt]{4-4}
\addlinespace[-3pt]
\arrayrulecolor{black}
\cline{2-7}
\arrayrulecolor{LightCyan}
\addlinespace[-1pt]
\cmidrule[2pt]{4-4}
\addlinespace[-3pt]
\arrayrulecolor{black}
&10&65.8&\cellcolor{LightCyan}\bf  66.9&66.4&66.1&66.4\\
\addlinespace[1pt]
\bottomrule
\end{tabular}
\caption{\label{block}}}
\end{subfigure}
\vspace{-0.5cm}
\end{table}

\vspace{-0.3cm}
\section{Conclusions}
We have proposed TENET, which uses higher-order tensor descriptors, in combination with a novel Tensor Shrinkage Operator, to generate highly-discriminative representations with tractable dimensionality.
We use these representations in our proposed Transformer Relation Head to dynamically extract correlations between query image regions and  support crops.  TENET has heightened robustness to large intra-class variations, leading to SOTA performance on all benchmarks.


\vspace{0.1cm}
\noindent\textbf{Acknowledgements.} We thank Dr. Ke Sun for early discussions on TSO and its relation to the Kullback-Leibler divergence
and the Tsallis entropy. PK was supported by the CSIRO’s Machine Learning and Artificial Intelligence Future Science Platform (MLAI FSP).
\clearpage

%
%
\bibliographystyle{splncs04}
\bibliography{egbib}



\begin{filecontents}[overwrite,nosearch]{5009-support.tex}
\newpage
\appendix

\title{Time-rEversed diffusioN tEnsor Transformer:\\A new TENET of Few-Shot Object Detection (Supplementary Material)} 

\author{Shan Zhang$^{\star, \dagger}$\orcidlink{0000-0002-5531-3296} \and
Naila Murray$^{\clubsuit}$\orcidlink{0000-0001-7032-0403} \and
Lei Wang$^{\vardiamond}$\orcidlink{0000-0002-0961-0441} \and
Piotr Koniusz$^{\star,\S,\dagger}$\orcidlink{0000-0002-6340-5289}}
\authorrunning{Zhang \etal}
\titlerunning{Time-rEversed diffusioN tEnsor Transformer (TENET)}
%
\institute{$^{\dagger}$Australian National University \;
$^{\clubsuit}$Meta AI \\
   $^{\vardiamond}$University of Wollongong \;
   $^\S$Data61/CSIRO\\
   $^{\dagger}$firstname.lastname@anu.edu.au, $^{\vardiamond}$leiw@uow.edu.au,  $^{\clubsuit}$murrayn@fb.com
}

\maketitle
\setcounter{table}{4}
\setcounter{equation}{16}
\setcounter{figure}{2}
Below are additional derivations, evaluations and illustrations of our method.

\section{Ablation Study on Encoding Network}\label{backbone}

Below we perform  ablations of the backbone (Encoding Network, termed as EN in main paper). We use ConvNet (ResNet-50) and Transformer network \citelatex{swin_sup} (Swin-B$^7$/ Swin-B$^{12}$ pre-trained on ImageNet-22K \cite{imagenet} with window size of 7/12), as shown in Table \ref{abla:backbone}. The comparisons are conducted by changing  the backbone, whereas other settings remain unchanged. When ResNet-50 is replaced by Swin-B$^7$, we gain an improvement of 0.3\% and 0.5\% in the 5/10-shot setting (novel classes).

\section{Details of Transformer Relation Head (TRH) with Z-shot and Spatial-HOP blocks.}\label{sec:TRH}
As  Z-shot T-RH is described in Eq. \eqref{eq:zsa_q} of the main paper, 
%
below we focus on describing Spatial-HOP T-RH. 

This head first forms a so-called self-attention on a set $\tZ$ of support regions and $\tB$ query RoIs, respectively. We formulate its operation for $B$ query RoIs (refer \S \ref{spatial-hop head} of main paper for support regions).
 Spatial-HOP T-RH takes as input RoI features $\{\vPhi^*_b\in\mbr{2d\times N}\}_{b\in\idx{B}}$ ($2d$ because layer 5 of ResNet-50 maps $d$-dimensional features to $2d$-dimensional features) and 
 $\{\vpsi^*_b\in \mbr{d}\}_{b\in\idx{B}}$.
We split $\vPhi^*_{b}$ along the channel mode of dimension  $2d$ to create two new matrices $\vPhi^{*u}_{b}\in\mbr{d\times N}$ and $\vPhi^{*l}_{b}\in\mbr{d\times N}$ for $b\in\idx{B}$.
We let $\vPhi^{*l}_{b}=[\vphi^{*l}_{b,1},\cdots,\vphi^{* l}_{b,N}]\in\mbr{d\times N}$.
Self-attention is then performed over  $\tT_{b}$ containing vectors, in parallel across $B$ RoIs, \ie, $\{\tT_{b}\}_{b\in\idx{B}}$:
\begin{equation}
\tT_{b}=[\vphi^{* l}_{b,1},\cdots,\vphi^{* l}_{b,N}, \bar{\vphi}^{* u}_{b}, \mW_g\vpsi^{*}_{b}],
\label{eq:shop_b}
\end{equation}
where $\bar{\vphi}$ denotes average-pooled features (FO) and $\mW_g\in\mbr{d\times d}$ denotes a linear projection (shared between query and support representations).

Based on these representations passed through the transformer head (variables indicated by widehat $\widehat{\cdot}\;$) between support regions 
and query RoIs, we then compute relations  as follows:
\begin{align}
&\mathcal{R}^{b}_{\text{Spatial}}=
\left[ \begin{array}{c} \widehat{\vPhi}^{\dagger l}-\widehat{\vPhi}^{*l}_{b}
\end{array} \right]\in \mbr{d \times N}, \,\,\,b \in\idx{B}, \\
&\mathcal{R}^{b}_{\text{FO+HO}}=
\left[ \begin{array}{c} 
\widehat{\bar{\vphi}}^{\dagger u}\cdot \widehat{\bar{\vphi}}^{*u}_{b} \\ 
\widehat{\vpsi}^{\dagger}\cdot \widehat{\vpsi}^{*}_{b}  \\
\end{array} \right]\in \mbr{2d},\\
&\mathcal{R}^{b}=
\left[ \begin{array}{c} 
\text{Repeat}(\mathcal{R}^{b}_\text{Spatial}; N) \\ 
\mW^u\mathcal{R}^b_{\text{FO+HO}} \\
\end{array} \right]\in \mbr{2d \times B},
\end{align}
where the learnable weight $\mW^{(u)} \in \mbr{d \times 2d}$ projects the channel-wise concatenated matrix to $d$ dimensions, letters $l$ and $u$ indicate first and second half of channel coefficients, respectively, operator $\cdot$ indicates element-wise multiplication,  and $\text{Repeat}(\cdot; N)$ replicates spatial mode $N$ times. 
The above process is shown in Fig. \ref{pipe-det}.


\begin{table}[t]
\makeatletter\def\@captype{table}\makeatother\caption{Experimental results of different variants of Transformer Relation Head (TRH), by varying Z-shot and Spatial-HOP blocks, are in Tab. \ref{spatial_hop}. Digits $1,\cdots,4$ indicate different orders included or excluded from each experiment. ``Spatial'' is the size of spatial map (downsampled by the bilinear interpolation). Next, Tab. \ref{abla:backbone} is an ablation of different variants of Encoding Network
(5/10-shot setting on VOC2007 testing set was used in Tab. \ref{spatial_hop} and \ref{abla:backbone}).
Finally, Fig. \ref{pn} shows mAP  \wrt $\eta'$ in SigmE (10-shot protocol  on VOC2007 and COCO testing  dataset, 5-shot setting on FSOD testing  dataset).}
\begin{subfigure}[t]{0.45\linewidth}{
\centering
\fontsize{6}{6}\selectfont  
\setlength{\tabcolsep}{3pt}
{\begin{tabular}{c|c|c|c|c c}
\toprule
$\substack{Z-\text{shot}\\\text{\fontsize{4}{1}\selectfont (1,2,3,4)}}$&Spatial& 1 &2,3,4 & 5-shot& 10-shot \\
\midrule
\multirow{12}{*}{\checkmark}&\multirow{3}{*}{7$\times$7}&\checkmark&~&57.9&64.2\\
~&~&~&\checkmark&61.3&65.8\\
\rowcolor{LightCyan}~&~&\checkmark&\checkmark&\bf 62.3&\bf 66.9\\
\cmidrule{2-6}
&\multirow{3}{*}{5$\times$5}&\checkmark&~&58.7&63.7\\
~&~&~&\checkmark&60.3&64.3\\
~&~&\checkmark&\checkmark&61.1&65.2\\
\cmidrule{2-6}
&\multirow{3}{*}{3$\times$3}&\checkmark&~&54.8&57.9\\
~&~&~&\checkmark&56.0&59.2\\
~&~&\checkmark&\checkmark&56.6&60.1\\
\cmidrule{2-6}
&\multirow{3}{*}{1$\times$1}&\checkmark&~&45.1&49.3\\
~&~&~&\checkmark&46.8&51.9\\
~&~&\checkmark&\checkmark&47.4&52.4\\
\midrule
\multirow{12}{*}{}&\multirow{3}{*}{7$\times$7}&\checkmark&~&55.2&60.7\\
~&~&~&\checkmark&59.4&64.6\\
~&~&\checkmark&\checkmark&61.0&65.8\\
\cmidrule{2-6}
&\multirow{3}{*}{5$\times$5}&\checkmark&~&57.6&61.5\\
~&~&~&\checkmark&58.6&63.0\\
~&~&\checkmark&\checkmark&60.3&63.4\\
\cmidrule{2-6}
&\multirow{3}{*}{3$\times$3}&\checkmark&~&52.4&54.8\\
~&~&~&\checkmark&54.1&57.8\\
~&~&\checkmark&\checkmark&54.5&58.3\\
\cmidrule{2-6}
&\multirow{3}{*}{1$\times$1}&\checkmark&~&44.0&47.1\\
~&~&~&\checkmark&45.2&48.4\\
~&~&\checkmark&\checkmark&46.3&50.1\\
\bottomrule
\end{tabular}}}
\caption{\label{spatial_hop}}
\end{subfigure}
\hspace{0.30cm}
\begin{subfigure}[t]{0.45\linewidth}
\vspace{-3.5cm}
\includegraphics[trim=24 23 25 25, clip=true, width=6.cm]{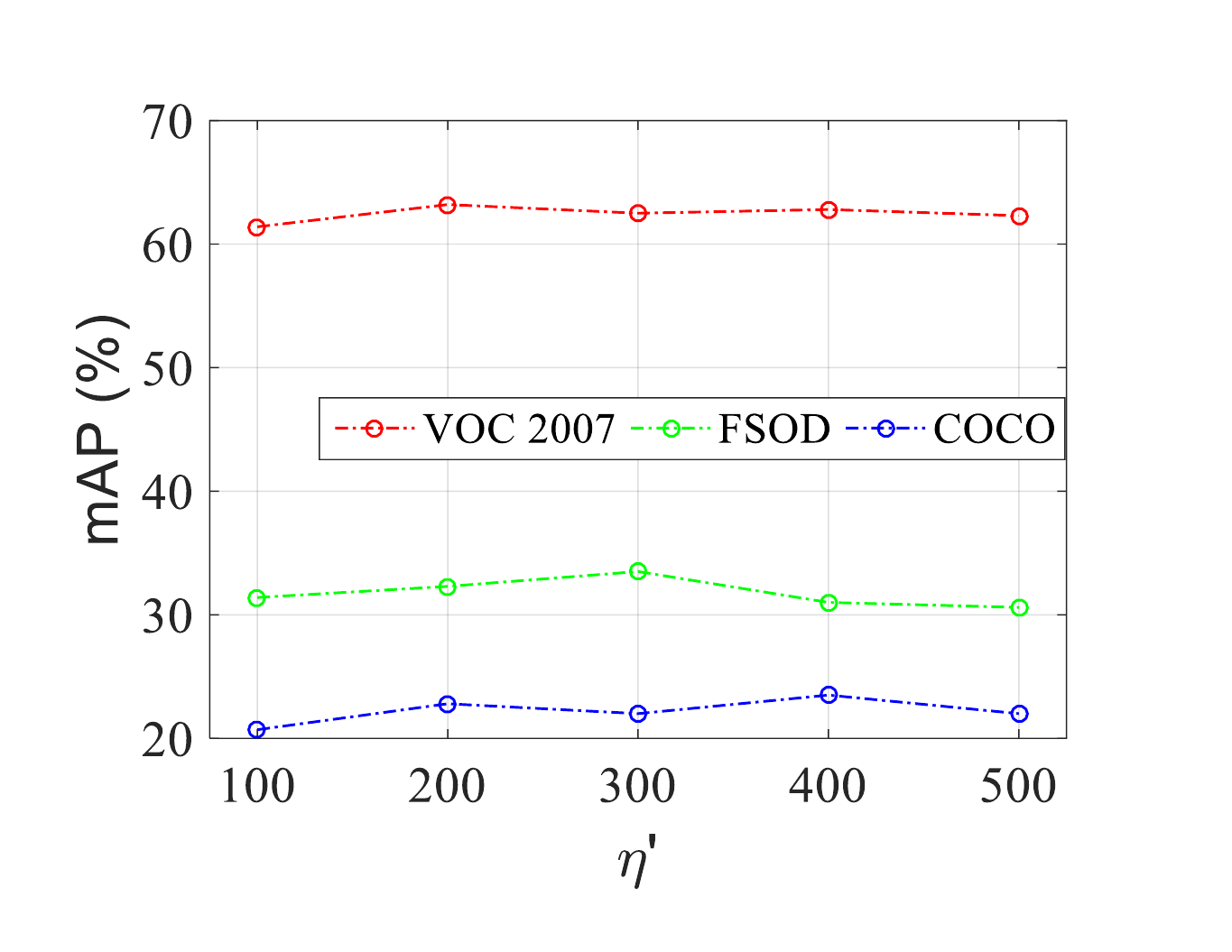}\vspace{-0.4cm}\caption{\label{pn}}
\hspace{0.5cm}
{\fontsize{6}{10}\selectfont  
\setlength{\tabcolsep}{10pt}{
\begin{tabular}{c c c}
\toprule
EN&5-shot &10-shot\\
\midrule
ResNet-50&62.3&66.9\\
\rowcolor{LightCyan}Swin-B$^{_{7}}$& \bf 62.6&\bf 67.4\\
Swin-B$^{_{12}}$&62.0&66.7\\
\bottomrule
\end{tabular}}}
\caption{\label{abla:backbone}}
\end{subfigure}
\vspace{-0.8cm}
\end{table}

\section{Ablation Study on Transformer Relation Head (TRH) with
Z-shot and Spatial-HOP blocks.}\label{Spatial-HOP-TRH}
\label{sup_more}
As the supplementary setting for the top panel of Tab. \ref{STF} (in the main paper), we utilize $r\!=\!1$ in RPN and $r\!=\!2,3,4$ in TRH, achieving
2.7\%/2.4\% improvement on novel/base classes, 5-shot protocol, over the variant applied $r\!=\!1$ in both RPN and TRH. 

We then conduct more ablation studies on Spatial-HOP transformer head to analyze the impact brought by each component (5/10-shot setting on novel classes, VOC 2007). The results are shown on Table \ref{spatial_hop}. Specifically, we mainly ablate three variants: spatial maps of assorted size (as in the table) with either orderless HOP representation of order $r\!=\!1$ or $r\!=\!2,3,4$, or both $r\!=\!1,2,3,4$.

Furthermore, to investigate the impact of spatial attention, we use bilinearly subsampled maps, ranging from $1 \times 1$ to $7 \times 7$ in spatial size. Not surprisingly, the Spatial-HOP head performs best when utilizing larger spatial maps, together with the orderless high-order and first-order tensor descriptors.

\begin{figure}[!htbp]
\begin{minipage}[t]{0.45\textwidth}
\centering
\includegraphics[trim=30 30 25 30, clip=true, width=6.cm]{./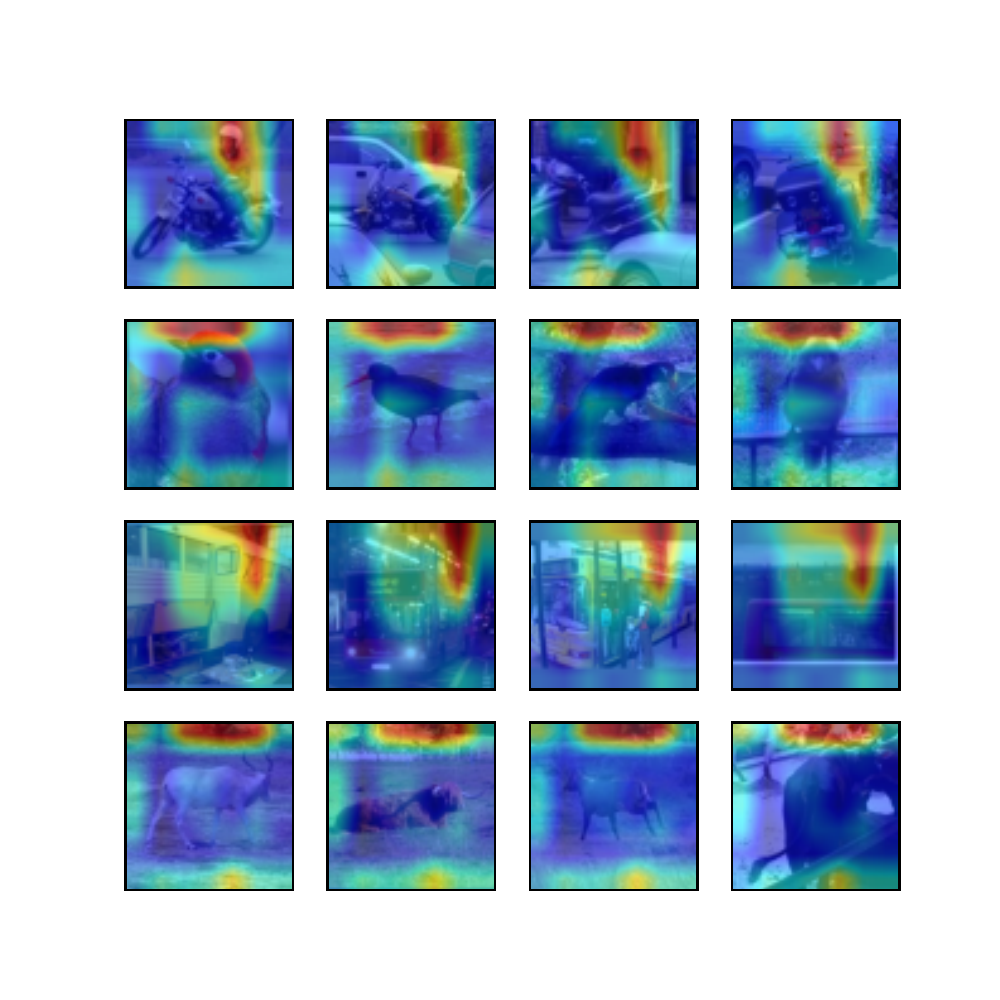}
\caption*{First-order fiber (FO) is visualised (Spatial-HOP T-RH used only spatial and FO ($r\!=\!1$) information during training)}
\end{minipage}
\hspace{0.1mm}
\begin{minipage}[t]{0.45\textwidth}
\includegraphics[trim=30 30 25 30, clip=true, width=6.cm]{./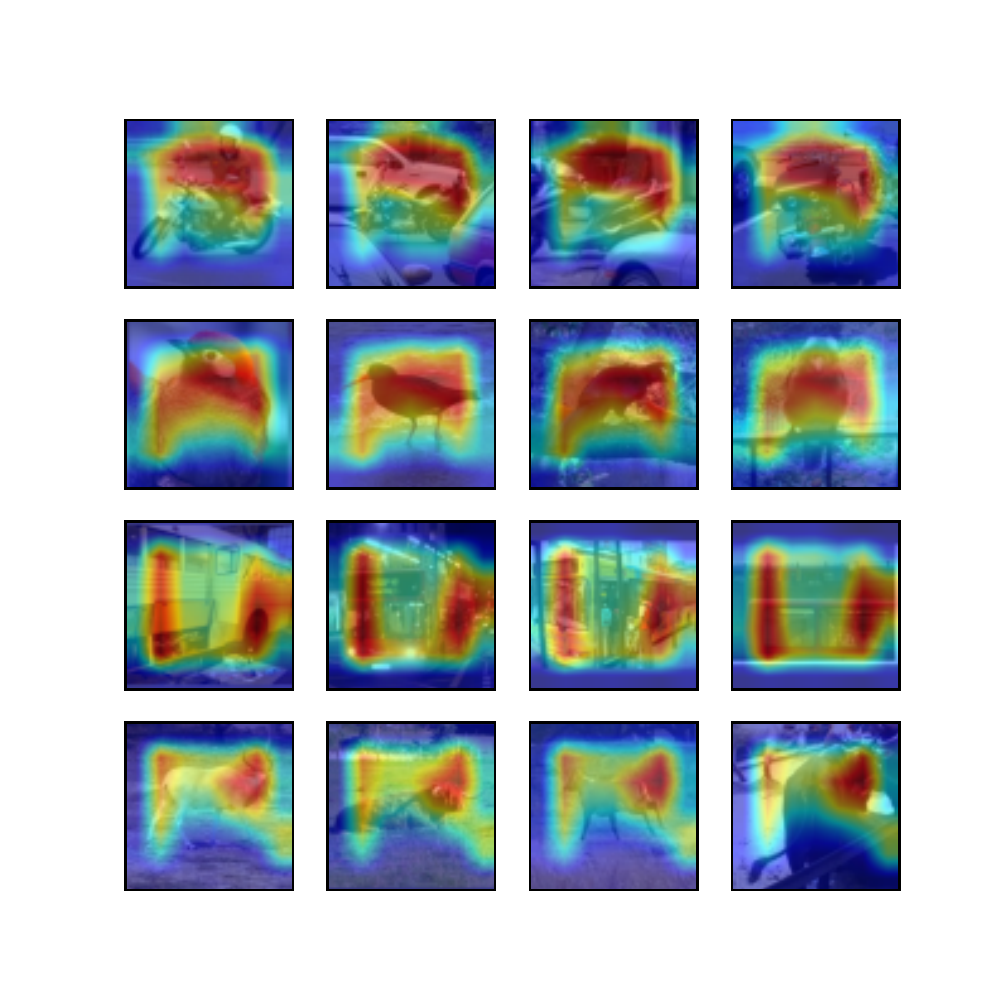}
\caption*{High-order fiber (HO) is visualised (Spatial-HOP T-RH used only spatial and HOP ($r\!=\!2,3,4$) information during training) }
\hspace{5mm}
\end{minipage}
\hspace{0.0002mm}
\begin{minipage}[t]{0.45\textwidth}
\centering
\includegraphics[trim=30 30 25 30, clip=true, width=6.cm]{./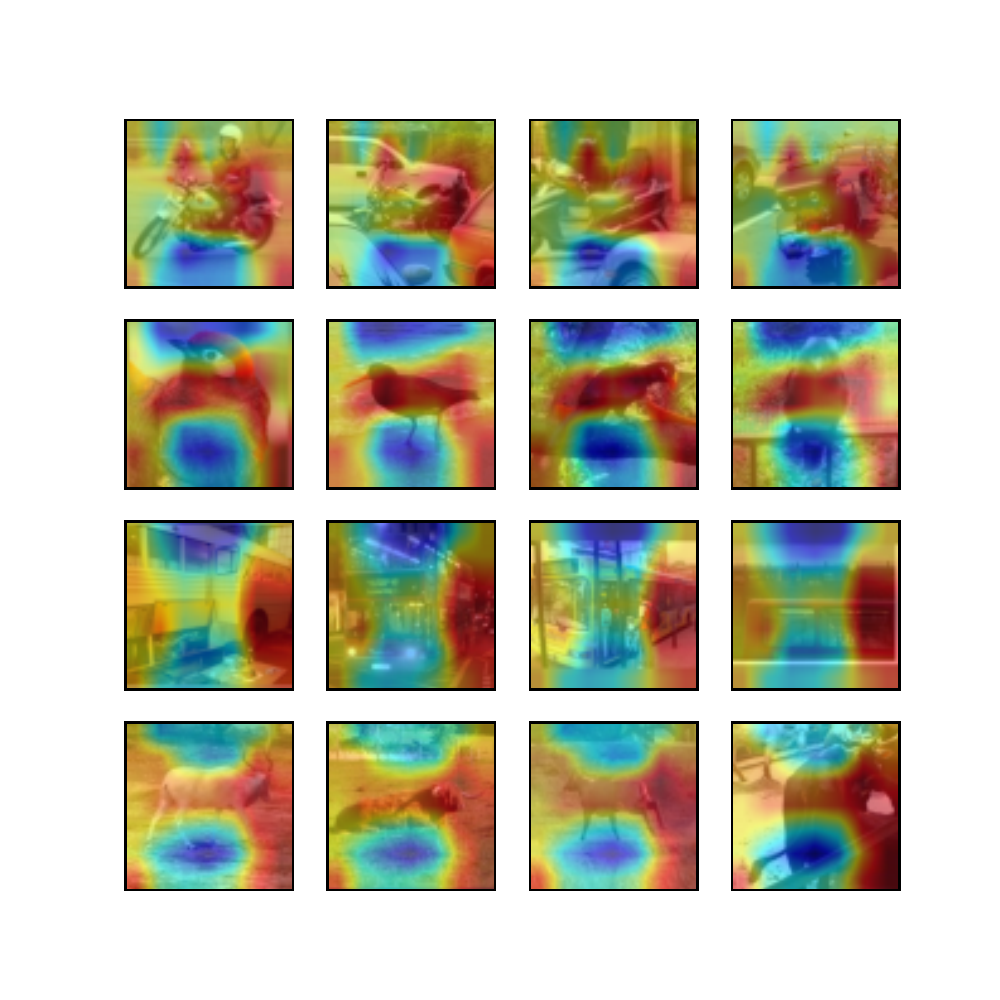}
\caption*{Spatial fibers are max-pooled and then visualised (Spatial-HOP T-RH used spatial, FO and HOP information ($r\!=\!1,2,3,4$) during training)}
\end{minipage}
\hspace{1cm}
\begin{minipage}[t]{0.45\textwidth}
\centering
\includegraphics[trim=30 30 25 30, clip=true, width=6.cm]{./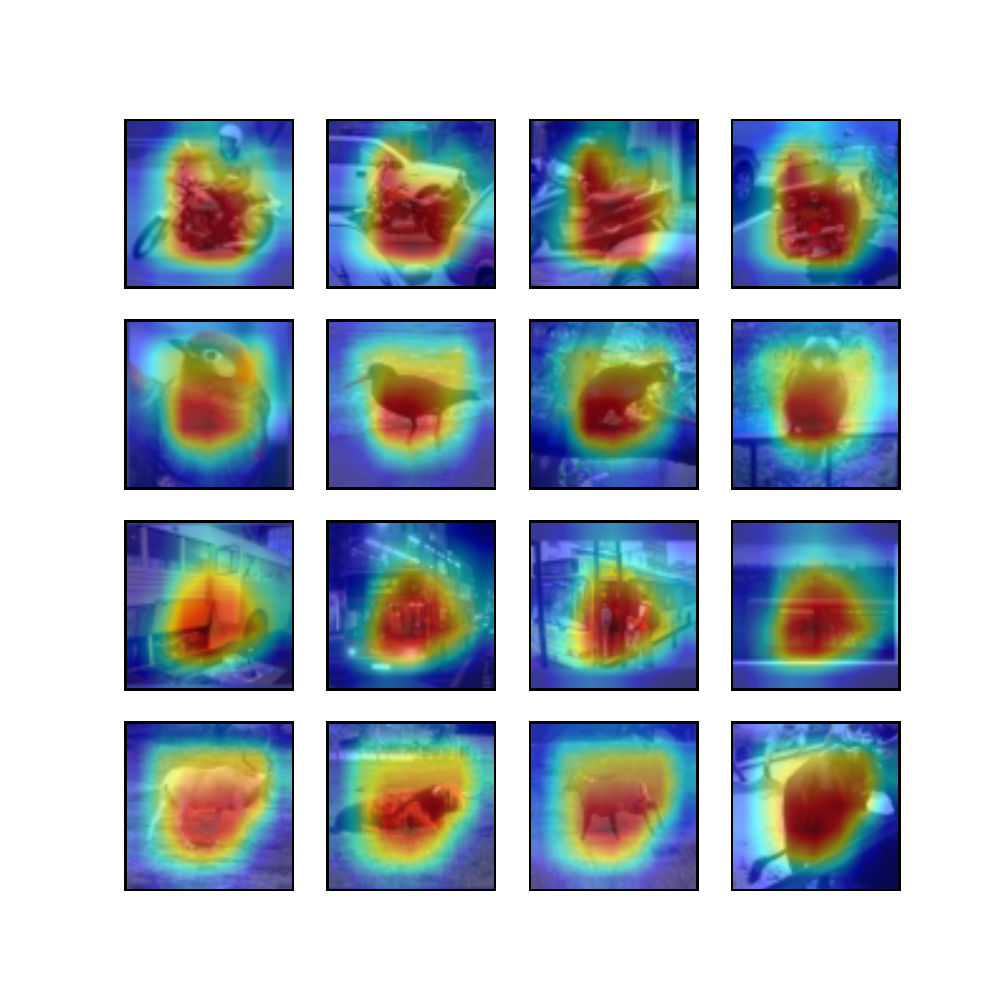}
\caption*{First-order fiber (FO) and High-order fiber (HO) are averaged and then visualised (Spatial-HOP T-RH used spatial, FO and HOP information ($r\!=\!1,2,3,4$) during training)}
\end{minipage}
\caption{Visualization of attention maps of self-attention \wrt support regions. The results are produced on VOC2007 test set, novel classes (motorbike, bird, bus and cow). 
See text for detailed descriptions.
}
\label{vis:atten}
\end{figure}

%
%
%

\section{Visualization of Attention Maps of the Spatial-HOP block.}\label{visual_att}

To explain why our model benefits from the combination of spatial attention, and orderless first-order and high-order representations, we provide  qualitative results based on displaying attention maps.

Firstly, we performed training where Spatial-HOP T-RH used only spatial and first-order  information (FO) during training. To obtain the picture, we picked  $\bar{\vphi}^{\dagger u}$ from Eq. \eqref{eq:shop_q} and we looked how it correlates with the $N$ spatial representations $\vphi^{\dagger l}_{1},\cdots,\vphi^{\dagger l}_{N}$. To that end, we passed these ``spatial fibers'' and FO representation via the RBF kernel of Eq.  \eqref{eq:rbf}, and we then reshaped $N$ into the spatial map ($7\!\times\!7$ size). 

Figure \ref{vis:atten} (top left) shows how the first-order representation (FO) correlates with each spatial fiber in the attention of transformer. As Spatial-HOP T-RH block uses information averaged over $K$ images of the same class in an episode ($K$-way images), each column shows one of these support images. Each row shows a different class image from $Z$-shot support images in the episode.

Subsequently, we performed training where Spatial-HOP T-RH used only spatial and high-order information (HO) during training. Thus, we picked the high-order representation $\mW^{(g})\vpsi^{\dagger}$ from Eq. \eqref{eq:shop_q} and we looked how it correlates with the $N$ spatial representations $\vphi^{\dagger l}_{1},\cdots,\vphi^{\dagger l}_{N}$. To that end, we passed these ``spatial fibers'' and HO representation via the RBF kernel of Eq.  \eqref{eq:rbf}, and we then reshaped $N$ into the spatial map ($7\!\times\!7$ size). 

Figure \ref{vis:atten} (top right) shows how the high-order representation (HO) correlates with each spatial fiber in the attention of transformer. As before, we visualise $K\! \times\!Z$ images from an episode given the $K$-way  $Z$-shot problem.

Comparing FO an HO representations, HO is by far more focused on the foreground objects that correlate in the semantic sense with the object class. This explains why HO representations help our model obtain better results compared to traditional attention mechanisms that focus only on capturing spatial correlations of a region.

Figure \ref{vis:atten} (bottom left)  shows how the spatial fibers from the attention matrix that is max-pooled along columns (we of course removed FO and HO before pooling along columns). We follow the same procedure as above, however, this time the Spatial-HOP T-RH block was utilizing the spatial, FO and HO information during training. Clearly, spatial attention can focus on complex spatial patterns in contrast to the focus of FO and HO. 

Figure \ref{vis:atten} (bottom right)  shows how the first-order representation (FO), averaged with the high-order representation (HO), correlate with each spatial fiber in the attention of transformer. We follow the same procedure as above, and still 
 use the spatial, FO and HO information in the Spatial-HOP T-RH block during training. Clearly, utilizing $r\!=\!1,2,3,4$ compares favourably with utilizing either $r\!=\!1$ or $r\!=\!2,3,4$ during training.

\section{Impact of $\eta'$ of SigmE.}\label{sigme}
According to Section \ref{sec:approach}, TSO benefits from element-wise PN, realized by the SigmE operator in Eq. \eqref{eq:sigme1}, which depends on the parameter $\eta'$.  Figure \ref{pn} shows  that $\eta'\!=\!200$ is a good choice on VOC dataset but $\eta'\!=\!300/400$  helps  obtain the best results on FSOD/COCO dataset. Overall, our approach is not overly sensitive to this parameter, and setting $\eta'\!=\!200$ on all  datasets if a good choice.

\section{Hyperparameters on the FSOD and COCO datasets.}\label{abla:datasets}

\begin{table}[!htbp]
\makeatletter\def\@captype{table}\makeatother\caption{Ablation studies on the FSOD and COCO datasets (5/10-shot, novel classes), \wrt the effect of varying (a) the number
of heads used in T-Heads Attention, as shown in Tab. \ref{fsodTA}, and (b) the number of TENET blocks as shown in Tab. \ref{fsodTB}. mAP of variants of High-order Tensor Descriptors (HoTD) with TSO ($\eta_r\!>\!1$) and without TSO ($\eta_r\!=\!1$) is in Tab. \ref{tem_fsod}.}
\begin{subfigure}[t]{0.25\linewidth}{
\centering
\fontsize{6}{8}\selectfont  
\setlength{\tabcolsep}{3pt}{
\begin{tabular}{c c c}
\toprule
\multirow{2}{*}{$TA$} & FSOD& COCO \\
\cmidrule{2-3}
&5-shot&10-shot\\
\midrule
1& 30.5&20.1\\
\rowcolor{LightCyan}2& \bf31.7&22.3\\
4& 31.2&22.6\\
\rowcolor{LightCyan}8& 30.8&\bf 23.5\\
16& 30.0&23.0\\
32& 29.4&21.8\\
64& 29.5&21.5\\
\bottomrule
\end{tabular}}}
\caption{\label{fsodTA}}
\end{subfigure}
\hspace{-0.3cm}
\begin{subfigure}[t]{0.25\linewidth}{
\centering
\fontsize{6}{10.3}\selectfont  
\setlength{\tabcolsep}{3pt}{
\begin{tabular}{c c c}
\toprule
\multirow{2}{*}{$TB$} & FSOD&COCO\\
\cmidrule{2-3}
&5-shot&10-shot\\
\midrule
1& 31.7&23.5\\
\rowcolor{LightCyan}2& \bf 33.5 &24.2\\
\rowcolor{LightCyan}3& 32.6&\bf 25.1\\
4& 31.0&24.8\\
5& 31.2&23.1\\
\bottomrule
\end{tabular}}}
\caption{\label{fsodTB}}
\end{subfigure}
\begin{subfigure}[t]{0.50\linewidth}{
\centering
\fontsize{6}{11.7}\selectfont  
\setlength{\tabcolsep}{1pt}
{\begin{tabular}{ccc|c|c|cc||c|cc}
\toprule
\multicolumn{3}{c|}{$r$}&\multirow{2}{*}{$\substack{\text{dim.}\\\text{\fontsize{4}{1}\selectfont split}}$}& \multirow{2}{*}{$\substack{\eta_r\\\text{\fontsize{4}{1}\selectfont(FSOD)}}$}&\multicolumn{2}{c||}{5-shot} & \multirow{2}{*}{$\substack{\eta_r\\\text{\fontsize{4}{1}\selectfont(COCO)}}$}&\multicolumn{2}{c}{10-shot} \\
\cmidrule{1-3}
\cmidrule{6-7}
\cmidrule{9-10}
2&3&4&~&~&$AP_{50}$&$AP_{75}$&~&$AP_{50}$&$AP_{75}$\\
\midrule
\checkmark&~&~&~&7&33.1&29.6&10&25.7&17.5\\
\checkmark&\checkmark&~& 3:1 &7,7&33.7&30.4&10,10&26.0&18.2\\
\rowcolor{LightCyan}\checkmark&\checkmark&\checkmark&5:2:1&7,7,7 &{\bf35.4}&{\bf31.6}&10,10,10&{\bf27.4}&{\bf19.6}\\
\checkmark&\checkmark&\checkmark&5:2:1&1,1,1&30.8&28.4&1,1,1&22.1&14.3\\
\bottomrule
\end{tabular}}}
\caption{\label{tem_fsod}}
\end{subfigure}
\end{table}

Tables \ref{fsodTA} and \ref{fsodTB} present the impact of the number of head used in T-Heads Attention ($TA$) and TENET block ($TB$) on results. We fix the $\sigma\!=\!0.5$ (the best value of standard deviation of the RBF kernel of transformers, selected by cross-validation on FSOD and COCO dataset) and then we investigate $TA$ and $TB$ (the number of attention units per block, and the number of blocks, respectively). Two heads together with two blocks are the best on the FSOD dataset, while eight heads aligned with three blocks yield the best results on the COCO dataset. Table \ref{tem_fsod}  shows results on  FSOD and COCO \wrt the dimension split along the feature channel (\eg, if $r\!=\!2,3$, ratio 3:1 means that three parts of channel dimension are taken to form the second-order representation, and one part of channel dimension is taken to form the third-order representation). The table also shows the impact of $\eta_r$ of TSO on results, where $\eta_r$ are individual parameters for each order $r$. Overall, using all three orders, as denoted by $r\!=\!2,3,4$, outperforms a second-order representation, indicated by  $r\!=\!2$. Importantly, TSO is used when $\eta_r\!>\!1$. Without TSO ($\eta_r\!=\!1$), results drop by a large margin, which highlights the practical importance of TSO on results.

\section{Comparison with QA-FewDet/DeFRCN fine-tuning/meta-testing setting (Table \ref{exp_rb_tab}).}
 Below we compare our method with  with QA-FewDet \citelatex{QA_sup}/DeFRCN \citelatex{DeFRCN_sup}.

\setlength{\tabcolsep}{0.8pt}
\begin{table}[h]
\centering
\fontsize{6}{6}\selectfont  
\caption{Comparison with QA-FewDet/DeFRCN (mAP\%).}
\label{exp_rb_tab}
\begin{tabular}{l|c|ccccc|ccccc|ccccc}
\toprule
\multirow{2}{*}{Method} & \multirow{1}{*}{Encoding} & \multicolumn{5}{c|}{Novel Set 1} & \multicolumn{5}{c|}{Novel Set 2} & \multicolumn{5}{c}{Novel Set 3} \\ 
& Network &1   & 2     & 3    & 5    & 10   & 1     & 2     & 3    & 5    & 10   & 1     & 2     & 3    & 5    & 10   \\ \midrule
\multicolumn{17}{c}{\textbf{Meta-training the model on base classes, and meta-testing on novel classes}} \\ \midrule
QA-FewDet &  ResNet-101 & {41.0} & {33.2} & {35.3} & {47.5} & {52.0}   & {23.5} & {29.4} & {37.9} & {35.9} & {37.1}   & {33.2} & {29.4} & {37.6} & {39.8} & {41.5} \\
\rowcolor{LightCyan}TENET (Ours) & ResNet-50 & \textbf{43.7} & \textbf{42.1} & \textbf{43.9} & \textbf{48.2} & \textbf{54.5}  & \textbf{32.5} & \textbf{35.2} & \textbf{39.5} & \bf  37.8 & \textbf{38.7} & \textbf{34.1} & \textbf{37.0} & \textbf{38.9} & \textbf{42.0} & \textbf{45.1} \\ 
\midrule
\multicolumn{17}{c}{\textbf{Fine-tuning the model on novel classes, and testing on novel classes}} \\ \midrule
QA-FewDet  & ResNet-101 & {42.4} & {51.9} & {55.7} & {62.6} & {63.4} & {25.9} & {37.8} & {46.6} & {48.9} & {51.1} & 35.2 & {42.9} & {47.8} & {54.8} & {53.5} \\
DeFRCN & ResNet-101 & \bf 53.6& \bf 57.5 &\bf 61.5& 64.1& 60.8& 30.1 &38.1& 47.0& \bf 53.3 &47.9 &\bf 48.4& \bf 50.9&\bf  52.3 &54.9& 57.4\\
TENET (Ours) & ResNet-50 &46.7&52.3 &55.4&62.3&66.9& 40.3 & 41.2&44.7&49.3&52.1 &35.5&41.5& 46.0&54.4&54.6\\ 
\rowcolor{LightCyan}TENET(Ours) & ResNet-101 & 48.5 & {55.2} & {58.7} & \bf 65.8 & \textbf{69.0}    & \bf 42.6 & \bf 43.4 & \textbf{47.9} & {52.0} & \bf 54.2    & {37.9} & {43.6}  & {48.8} & \bf56.9 & \bf{57.6} \\ 
\bottomrule
\end{tabular}
\end{table}

\newpage 

\section{Comparison with SOTA on MS COCO minival set (10/30 -shot) as shown in Table \ref{exp_rb_coco}.}
\setlength{\tabcolsep}{1pt}
\begin{table}[!htbp]
\centering
\fontsize{7}{7}\selectfont  
\caption{Evaluations on the MS COCO minival set (10/30- shot). Methods that do not disclose all shot results are ignored and are replaced with `--'.}
\label{exp_rb_coco}
\begin{tabular}{c|c|cc|cc|cc}
            \toprule
            \multirow{2}{*}{Method}&\multirow{2}{*}{Venue}&\multicolumn{2}{c|}{$AP$} &\multicolumn{2}{c|}{{$AP_{50}$}}&\multicolumn{2}{c}{{$AP_{75}$}}\\
           &&10  & 30 & 10  & 30& 10  & 30  \\
            \midrule
            FSCE+SVD  & NeurIPS 2021& 12.0  &16.0 & -- & -- &10.4  &15.3 \\
            FADI  & NeurIPS 2021  & 12.2  &16.1 & -- & -- &11.9  &15.8 \\
            SRR-FSD & CVPR 2021  &11.3&14.7 &23.0&29.2 &9.8&13.5\\
            Zhang \etal & CVPR 2021  & 12.6& -- & 27.0 & --  & 10.9  & --\\
            \midrule
            QA-FewDet & ICCV 2021  & 11.6 & 16.5  & 23.9  & 31.9& 9.8& 15.5\\
            DeFRCN & ICCV 2021  & 18.5 & 22.6   & --  & --& --  & --\\
             \midrule
            QSAM& WACV 2022 & 13.0 &15.3 &24.7& 29.3 &12.1 &14.5 \\
             FCT& CVPR 2022  & 15.3 & 20.2   & --  & --& --  & --\\
            \midrule
		\rowcolor{LightCyan}TENET& Ours &\textbf{19.1} & \textbf{23.7}& \textbf{27.4} & \textbf{32.2}& \textbf{19.6}& \textbf{23.1}\\
            \bottomrule
        \end{tabular}
        \end{table}
\section{Mean $\pm$ std of mAP on PASCAL VOC 2007  (Table \ref{voc_mstd}).}
\label{voc_std}

\setlength{\tabcolsep}{3pt}
\begin{table}[!htbp]
\centering
\fontsize{7}{7}\selectfont  
\caption{Evaluations 
on three test splits of  VOC 2007  (mean mAP $\pm$ std).}
\label{voc_mstd}
\begin{tabular}{ll cccc}
\toprule
\multicolumn{2}{c}{\multirow{2}{*}{Method/Shot}}&  
\multicolumn{4}{c}{Mean$\pm$std}\\
\cmidrule{3-6}
&\multicolumn{1}{c}{}& 1 &3 &5&\multicolumn{1}{c}{10}\\
\midrule
FRCN &ICCV12  & 7.6$\pm$3.1&23.5$\pm$4.5&32.3$\pm$3.3&36.4$\pm$6.0\\ 
FR &ICCV19 &16.6$\pm1.9$&25.0$\pm$1.7&34.9$\pm$4.3&42.6$\pm$3.4\\
Meta& ICCV19 &14.9$\pm$3.9&30.7$\pm$3.2&40.6$\pm$4.5&48.3$\pm$2.5\\
FSOD& CVPR20&25.4$\pm$3.2&32.0$\pm$4.8&42.2$\pm$4.2&47.9$\pm$3.9\\
NP-RepMet& NeurIPS20&37.6$\pm$3.4&41.6$\pm$1.5&45.4$\pm$2.8&47.8$\pm$2.1\\

PNSD &ACCV20 &31.3$\pm$4.4&36.2$\pm$4.2&44.5$\pm$3.8&49.9$\pm$5.4\\
MPSR &ECCV20 &33.9$\pm$7.2&44.3$\pm$5.2&47.7$\pm$6.2&53.1$\pm$6.2\\
TFA &ICML20&31.4$\pm$6.7&40.5$\pm$4.6&46.8$\pm$8.6&48.3$\pm$7.0\\
FSCE &CVPR21&31.4$\pm$9.3&44.8$\pm$4.9&51.1$\pm$7.7&55.9$\pm$5.6\\
CGDP+FRCN &CVPR21&33.1$\pm$5.6&43.7$\pm$2.3&50.0$\pm$6.0&54.8$\pm$6.6\\
TIP &CVPR21&24.0$\pm$2.6&38.4$\pm$4.0&45.2$\pm$4.3&52.5$\pm$5.3\\
FSOD\textsuperscript{up}&ICCV21&36.8$\pm$5.2&45.1$\pm$3.8&50.1$\pm$4.6&54.5$\pm$5.5\\
QSAM&WACV22&26.1$\pm$3.5&35.4$\pm$2.9&45.4$\pm$3.6&51.9$\pm$3.8\\
\midrule
\rowcolor{LightCyan}TENET&(Ours) &{\bf 40.8$\pm$3.6}&{\bf 48.7$\pm$4.7}&{\bf55.3$\pm$3.1}&\bf{57.9$\pm$5.8}\\
\bottomrule
\end{tabular}
\end{table}

\bibliographystylelatex{splncs04}
\bibliographylatex{egbib}

\end{filecontents}

\newpage
\appendix

\title{Time-rEversed diffusioN tEnsor Transformer:\\A new TENET of Few-Shot Object Detection (Supplementary Material)} 

\author{Shan Zhang$^{\star, \dagger}$\orcidlink{0000-0002-5531-3296} \and
Naila Murray$^{\clubsuit}$\orcidlink{0000-0001-7032-0403} \and
Lei Wang$^{\vardiamond}$\orcidlink{0000-0002-0961-0441} \and
Piotr Koniusz$^{\star,\S,\dagger}$\orcidlink{0000-0002-6340-5289}}
\authorrunning{Zhang \etal}
\titlerunning{Time-rEversed diffusioN tEnsor Transformer (TENET)}
%
\institute{$^{\dagger}$Australian National University \;
$^{\clubsuit}$Meta AI \\
   $^{\vardiamond}$University of Wollongong \;
   $^\S$Data61/CSIRO\\
   $^{\dagger}$firstname.lastname@anu.edu.au, $^{\vardiamond}$leiw@uow.edu.au,  $^{\clubsuit}$murrayn@fb.com
}

\maketitle
\setcounter{table}{4}
\setcounter{equation}{16}
\setcounter{figure}{2}
Below are additional derivations, evaluations and illustrations of our method.

\section{Ablation Study on Encoding Network}\label{backbone}

Below we perform  ablations of the backbone (Encoding Network, termed as EN in main paper). We use ConvNet (ResNet-50) and Transformer network \citelatex{swin_sup} (Swin-B$^7$/ Swin-B$^{12}$ pre-trained on ImageNet-22K \cite{imagenet} with window size of 7/12), as shown in Table \ref{abla:backbone}. The comparisons are conducted by changing  the backbone, whereas other settings remain unchanged. When ResNet-50 is replaced by Swin-B$^7$, we gain an improvement of 0.3\% and 0.5\% in the 5/10-shot setting (novel classes).

\section{Details of Transformer Relation Head (TRH) with Z-shot and Spatial-HOP blocks.}\label{sec:TRH}
As  Z-shot T-RH is described in Eq. \eqref{eq:zsa_q} of the main paper,
%
below we focus on describing Spatial-HOP T-RH.

This head first forms a so-called self-attention on a set $\tZ$ of support regions and $\tB$ query RoIs, respectively. We formulate its operation for $B$ query RoIs (refer \S \ref{spatial-hop head} of main paper for support regions).
 Spatial-HOP T-RH takes as input RoI features $\{\vPhi^*_b\in\mbr{2d\times N}\}_{b\in\idx{B}}$ ($2d$ because layer 5 of ResNet-50 maps $d$-dimensional features to $2d$-dimensional features) and 
 $\{\vpsi^*_b\in \mbr{d}\}_{b\in\idx{B}}$.
We split $\vPhi^*_{b}$ along the channel mode of dimension  $2d$ to create two new matrices $\vPhi^{*u}_{b}\in\mbr{d\times N}$ and $\vPhi^{*l}_{b}\in\mbr{d\times N}$ for $b\in\idx{B}$.
We let $\vPhi^{*l}_{b}=[\vphi^{*l}_{b,1},\cdots,\vphi^{* l}_{b,N}]\in\mbr{d\times N}$.
Self-attention is then performed over  $\tT_{b}$ containing vectors, in parallel across $B$ RoIs, \ie, $\{\tT_{b}\}_{b\in\idx{B}}$:
\begin{equation}
\tT_{b}=[\vphi^{* l}_{b,1},\cdots,\vphi^{* l}_{b,N}, \bar{\vphi}^{* u}_{b}, \mW_g\vpsi^{*}_{b}],
\label{eq:shop_b}
\end{equation}
where $\bar{\vphi}$ denotes average-pooled features (FO) and $\mW_g\in\mbr{d\times d}$ denotes a linear projection (shared between query and support representations).

Based on these representations passed through the transformer head (variables indicated by widehat $\widehat{\cdot}\;$) between support regions 
and query RoIs, we then compute relations  as follows:
\begin{align}
&\mathcal{R}^{b}_{\text{Spatial}}=
\left[ \begin{array}{c} \widehat{\vPhi}^{\dagger l}-\widehat{\vPhi}^{*l}_{b}
\end{array} \right]\in \mbr{d \times N}, \,\,\,b \in\idx{B}, \\
&\mathcal{R}^{b}_{\text{FO+HO}}=
\left[ \begin{array}{c}
\widehat{\bar{\vphi}}^{\dagger u}\cdot \widehat{\bar{\vphi}}^{*u}_{b} \\
\widehat{\vpsi}^{\dagger}\cdot \widehat{\vpsi}^{*}_{b}  \\
\end{array} \right]\in \mbr{2d},\\
&\mathcal{R}^{b}=
\left[ \begin{array}{c}
\text{Repeat}(\mathcal{R}^{b}_\text{Spatial}; N) \\
\mW^u\mathcal{R}^b_{\text{FO+HO}} \\
\end{array} \right]\in \mbr{2d \times B},
\end{align}
where the learnable weight $\mW^{(u)} \in \mbr{d \times 2d}$ projects the channel-wise concatenated matrix to $d$ dimensions, letters $l$ and $u$ indicate first and second half of channel coefficients, respectively, operator $\cdot$ indicates element-wise multiplication,  and $\text{Repeat}(\cdot; N)$ replicates spatial mode $N$ times. 
The above process is shown in Fig. \ref{pipe-det}.


\begin{table}[t]
\makeatletter\def\@captype{table}\makeatother\caption{Experimental results of different variants of Transformer Relation Head (TRH), by varying Z-shot and Spatial-HOP blocks, are in Tab. \ref{spatial_hop}. Digits $1,\cdots,4$ indicate different orders included or excluded from each experiment. ``Spatial'' is the size of spatial map (downsampled by the bilinear interpolation). Next, Tab. \ref{abla:backbone} is an ablation of different variants of Encoding Network
(5/10-shot setting on VOC2007 testing set was used in Tab. \ref{spatial_hop} and \ref{abla:backbone}).
Finally, Fig. \ref{pn} shows mAP  \wrt $\eta'$ in SigmE (10-shot protocol  on VOC2007 and COCO testing  dataset, 5-shot setting on FSOD testing  dataset).}
\begin{subfigure}[t]{0.45\linewidth}{
\centering
\fontsize{6}{6}\selectfont
\setlength{\tabcolsep}{3pt}
{\begin{tabular}{c|c|c|c|c c}
\toprule
$\substack{Z-\text{shot}\\\text{\fontsize{4}{1}\selectfont (1,2,3,4)}}$&Spatial& 1 &2,3,4 & 5-shot& 10-shot \\
\midrule
\multirow{12}{*}{\checkmark}&\multirow{3}{*}{7$\times$7}&\checkmark&~&57.9&64.2\\
~&~&~&\checkmark&61.3&65.8\\
\rowcolor{LightCyan}~&~&\checkmark&\checkmark&\bf 62.3&\bf 66.9\\
\cmidrule{2-6}
&\multirow{3}{*}{5$\times$5}&\checkmark&~&58.7&63.7\\
~&~&~&\checkmark&60.3&64.3\\
~&~&\checkmark&\checkmark&61.1&65.2\\
\cmidrule{2-6}
&\multirow{3}{*}{3$\times$3}&\checkmark&~&54.8&57.9\\
~&~&~&\checkmark&56.0&59.2\\
~&~&\checkmark&\checkmark&56.6&60.1\\
\cmidrule{2-6}
&\multirow{3}{*}{1$\times$1}&\checkmark&~&45.1&49.3\\
~&~&~&\checkmark&46.8&51.9\\
~&~&\checkmark&\checkmark&47.4&52.4\\
\midrule
\multirow{12}{*}{}&\multirow{3}{*}{7$\times$7}&\checkmark&~&55.2&60.7\\
~&~&~&\checkmark&59.4&64.6\\
~&~&\checkmark&\checkmark&61.0&65.8\\
\cmidrule{2-6}
&\multirow{3}{*}{5$\times$5}&\checkmark&~&57.6&61.5\\
~&~&~&\checkmark&58.6&63.0\\
~&~&\checkmark&\checkmark&60.3&63.4\\
\cmidrule{2-6}
&\multirow{3}{*}{3$\times$3}&\checkmark&~&52.4&54.8\\
~&~&~&\checkmark&54.1&57.8\\
~&~&\checkmark&\checkmark&54.5&58.3\\
\cmidrule{2-6}
&\multirow{3}{*}{1$\times$1}&\checkmark&~&44.0&47.1\\
~&~&~&\checkmark&45.2&48.4\\
~&~&\checkmark&\checkmark&46.3&50.1\\
\bottomrule
\end{tabular}}}
\caption{\label{spatial_hop}}
\end{subfigure}
\hspace{0.30cm}
\begin{subfigure}[t]{0.45\linewidth}
\vspace{-3.5cm}
\includegraphics[trim=24 23 25 25, clip=true, width=6.cm]{images/pn.pdf}\vspace{-0.4cm}\caption{\label{pn}}
\hspace{0.5cm}
{\fontsize{6}{10}\selectfont
\setlength{\tabcolsep}{10pt}{
\begin{tabular}{c c c}
\toprule
EN&5-shot &10-shot\\
\midrule
ResNet-50&62.3&66.9\\
\rowcolor{LightCyan}Swin-B$^{_{7}}$& \bf 62.6&\bf 67.4\\
Swin-B$^{_{12}}$&62.0&66.7\\
\bottomrule
\end{tabular}}}
\caption{\label{abla:backbone}}
\end{subfigure}
\vspace{-0.8cm}
\end{table}

\section{Ablation Study on Transformer Relation Head (TRH) with
Z-shot and Spatial-HOP blocks.}\label{Spatial-HOP-TRH}
\label{sup_more}
As the supplementary setting for the top panel of Tab. \ref{STF} (in the main paper), we utilize $r\!=\!1$ in RPN and $r\!=\!2,3,4$ in TRH, achieving
2.7\%/2.4\% improvement on novel/base classes, 5-shot protocol, over the variant applied $r\!=\!1$ in both RPN and TRH.

We then conduct more ablation studies on Spatial-HOP transformer head to analyze the impact brought by each component (5/10-shot setting on novel classes, VOC 2007). The results are shown on Table \ref{spatial_hop}. Specifically, we mainly ablate three variants: spatial maps of assorted size (as in the table) with either orderless HOP representation of order $r\!=\!1$ or $r\!=\!2,3,4$, or both $r\!=\!1,2,3,4$.

Furthermore, to investigate the impact of spatial attention, we use bilinearly subsampled maps, ranging from $1 \times 1$ to $7 \times 7$ in spatial size. Not surprisingly, the Spatial-HOP head performs best when utilizing larger spatial maps, together with the orderless high-order and first-order tensor descriptors.

\begin{figure}[!htbp]
\begin{minipage}[t]{0.45\textwidth}
\centering
\includegraphics[trim=30 30 25 30, clip=true, width=6.cm]{./images/fo.pdf}
\caption*{First-order fiber (FO) is visualised (Spatial-HOP T-RH used only spatial and FO ($r\!=\!1$) information during training)}
\end{minipage}
\hspace{0.1mm}
\begin{minipage}[t]{0.45\textwidth}
\includegraphics[trim=30 30 25 30, clip=true, width=6.cm]{./images/ho.pdf}
\caption*{High-order fiber (HO) is visualised (Spatial-HOP T-RH used only spatial and HOP ($r\!=\!2,3,4$) information during training) }
\hspace{5mm}
\end{minipage}
\hspace{0.0002mm}
\begin{minipage}[t]{0.45\textwidth}
\centering
\includegraphics[trim=30 30 25 30, clip=true, width=6.cm]{./images/spatial.pdf}
\caption*{Spatial fibers are max-pooled and then visualised (Spatial-HOP T-RH used spatial, FO and HOP information ($r\!=\!1,2,3,4$) during training)}
\end{minipage}
\hspace{1cm}
\begin{minipage}[t]{0.45\textwidth}
\centering
\includegraphics[trim=30 30 25 30, clip=true, width=6.cm]{./images/final.pdf}
\caption*{First-order fiber (FO) and High-order fiber (HO) are averaged and then visualised (Spatial-HOP T-RH used spatial, FO and HOP information ($r\!=\!1,2,3,4$) during training)}
\end{minipage}
\caption{Visualization of attention maps of self-attention \wrt support regions. The results are produced on VOC2007 test set, novel classes (motorbike, bird, bus and cow). 
See text for detailed descriptions.
}
\label{vis:atten}
\end{figure}

%
%
%

\section{Visualization of Attention Maps of the Spatial-HOP block.}\label{visual_att}

To explain why our model benefits from the combination of spatial attention, and orderless first-order and high-order representations, we provide  qualitative results based on displaying attention maps.

Firstly, we performed training where Spatial-HOP T-RH used only spatial and first-order  information (FO) during training. To obtain the picture, we picked  $\bar{\vphi}^{\dagger u}$ from Eq. \eqref{eq:shop_q} and we looked how it correlates with the $N$ spatial representations $\vphi^{\dagger l}_{1},\cdots,\vphi^{\dagger l}_{N}$. To that end, we passed these ``spatial fibers'' and FO representation via the RBF kernel of Eq.  \eqref{eq:rbf}, and we then reshaped $N$ into the spatial map ($7\!\times\!7$ size).

Figure \ref{vis:atten} (top left) shows how the first-order representation (FO) correlates with each spatial fiber in the attention of transformer. As Spatial-HOP T-RH block uses information averaged over $K$ images of the same class in an episode ($K$-way images), each column shows one of these support images. Each row shows a different class image from $Z$-shot support images in the episode.

Subsequently, we performed training where Spatial-HOP T-RH used only spatial and high-order information (HO) during training. Thus, we picked the high-order representation $\mW^{(g})\vpsi^{\dagger}$ from Eq. \eqref{eq:shop_q} and we looked how it correlates with the $N$ spatial representations $\vphi^{\dagger l}_{1},\cdots,\vphi^{\dagger l}_{N}$. To that end, we passed these ``spatial fibers'' and HO representation via the RBF kernel of Eq.  \eqref{eq:rbf}, and we then reshaped $N$ into the spatial map ($7\!\times\!7$ size).

Figure \ref{vis:atten} (top right) shows how the high-order representation (HO) correlates with each spatial fiber in the attention of transformer. As before, we visualise $K\! \times\!Z$ images from an episode given the $K$-way  $Z$-shot problem.

Comparing FO an HO representations, HO is by far more focused on the foreground objects that correlate in the semantic sense with the object class. This explains why HO representations help our model obtain better results compared to traditional attention mechanisms that focus only on capturing spatial correlations of a region.

Figure \ref{vis:atten} (bottom left)  shows how the spatial fibers from the attention matrix that is max-pooled along columns (we of course removed FO and HO before pooling along columns). We follow the same procedure as above, however, this time the Spatial-HOP T-RH block was utilizing the spatial, FO and HO information during training. Clearly, spatial attention can focus on complex spatial patterns in contrast to the focus of FO and HO.

Figure \ref{vis:atten} (bottom right)  shows how the first-order representation (FO), averaged with the high-order representation (HO), correlate with each spatial fiber in the attention of transformer. We follow the same procedure as above, and still
 use the spatial, FO and HO information in the Spatial-HOP T-RH block during training. Clearly, utilizing $r\!=\!1,2,3,4$ compares favourably with utilizing either $r\!=\!1$ or $r\!=\!2,3,4$ during training.

\section{Impact of $\eta'$ of SigmE.}\label{sigme}
According to Section \ref{sec:approach}, TSO benefits from element-wise PN, realized by the SigmE operator in Eq. \eqref{eq:sigme1}, which depends on the parameter $\eta'$.  Figure \ref{pn} shows  that $\eta'\!=\!200$ is a good choice on VOC dataset but $\eta'\!=\!300/400$  helps  obtain the best results on FSOD/COCO dataset. Overall, our approach is not overly sensitive to this parameter, and setting $\eta'\!=\!200$ on all  datasets if a good choice.

\section{Hyperparameters on the FSOD and COCO datasets.}\label{abla:datasets}

\begin{table}[!htbp]
\makeatletter\def\@captype{table}\makeatother\caption{Ablation studies on the FSOD and COCO datasets (5/10-shot, novel classes), \wrt the effect of varying (a) the number
of heads used in T-Heads Attention, as shown in Tab. \ref{fsodTA}, and (b) the number of TENET blocks as shown in Tab. \ref{fsodTB}. mAP of variants of High-order Tensor Descriptors (HoTD) with TSO ($\eta_r\!>\!1$) and without TSO ($\eta_r\!=\!1$) is in Tab. \ref{tem_fsod}.}
\begin{subfigure}[t]{0.25\linewidth}{
\centering
\fontsize{6}{8}\selectfont
\setlength{\tabcolsep}{3pt}{
\begin{tabular}{c c c}
\toprule
\multirow{2}{*}{$TA$} & FSOD& COCO \\
\cmidrule{2-3}
&5-shot&10-shot\\
\midrule
1& 30.5&20.1\\
\rowcolor{LightCyan}2& \bf31.7&22.3\\
4& 31.2&22.6\\
\rowcolor{LightCyan}8& 30.8&\bf 23.5\\
16& 30.0&23.0\\
32& 29.4&21.8\\
64& 29.5&21.5\\
\bottomrule
\end{tabular}}}
\caption{\label{fsodTA}}
\end{subfigure}
\hspace{-0.3cm}
\begin{subfigure}[t]{0.25\linewidth}{
\centering
\fontsize{6}{10.3}\selectfont
\setlength{\tabcolsep}{3pt}{
\begin{tabular}{c c c}
\toprule
\multirow{2}{*}{$TB$} & FSOD&COCO\\
\cmidrule{2-3}
&5-shot&10-shot\\
\midrule
1& 31.7&23.5\\
\rowcolor{LightCyan}2& \bf 33.5 &24.2\\
\rowcolor{LightCyan}3& 32.6&\bf 25.1\\
4& 31.0&24.8\\
5& 31.2&23.1\\
\bottomrule
\end{tabular}}}
\caption{\label{fsodTB}}
\end{subfigure}
\begin{subfigure}[t]{0.50\linewidth}{
\centering
\fontsize{6}{11.7}\selectfont
\setlength{\tabcolsep}{1pt}
{\begin{tabular}{ccc|c|c|cc||c|cc}
\toprule
\multicolumn{3}{c|}{$r$}&\multirow{2}{*}{$\substack{\text{dim.}\\\text{\fontsize{4}{1}\selectfont split}}$}& \multirow{2}{*}{$\substack{\eta_r\\\text{\fontsize{4}{1}\selectfont(FSOD)}}$}&\multicolumn{2}{c||}{5-shot} & \multirow{2}{*}{$\substack{\eta_r\\\text{\fontsize{4}{1}\selectfont(COCO)}}$}&\multicolumn{2}{c}{10-shot} \\
\cmidrule{1-3}
\cmidrule{6-7}
\cmidrule{9-10}
2&3&4&~&~&$AP_{50}$&$AP_{75}$&~&$AP_{50}$&$AP_{75}$\\
\midrule
\checkmark&~&~&~&7&33.1&29.6&10&25.7&17.5\\
\checkmark&\checkmark&~& 3:1 &7,7&33.7&30.4&10,10&26.0&18.2\\
\rowcolor{LightCyan}\checkmark&\checkmark&\checkmark&5:2:1&7,7,7 &{\bf35.4}&{\bf31.6}&10,10,10&{\bf27.4}&{\bf19.6}\\
\checkmark&\checkmark&\checkmark&5:2:1&1,1,1&30.8&28.4&1,1,1&22.1&14.3\\
\bottomrule
\end{tabular}}}
\caption{\label{tem_fsod}}
\end{subfigure}
\end{table}

Tables \ref{fsodTA} and \ref{fsodTB} present the impact of the number of head used in T-Heads Attention ($TA$) and TENET block ($TB$) on results. We fix the $\sigma\!=\!0.5$ (the best value of standard deviation of the RBF kernel of transformers, selected by cross-validation on FSOD and COCO dataset) and then we investigate $TA$ and $TB$ (the number of attention units per block, and the number of blocks, respectively). Two heads together with two blocks are the best on the FSOD dataset, while eight heads aligned with three blocks yield the best results on the COCO dataset. Table \ref{tem_fsod}  shows results on  FSOD and COCO \wrt the dimension split along the feature channel (\eg, if $r\!=\!2,3$, ratio 3:1 means that three parts of channel dimension are taken to form the second-order representation, and one part of channel dimension is taken to form the third-order representation). The table also shows the impact of $\eta_r$ of TSO on results, where $\eta_r$ are individual parameters for each order $r$. Overall, using all three orders, as denoted by $r\!=\!2,3,4$, outperforms a second-order representation, indicated by  $r\!=\!2$. Importantly, TSO is used when $\eta_r\!>\!1$. Without TSO ($\eta_r\!=\!1$), results drop by a large margin, which highlights the practical importance of TSO on results.

\section{Comparison with QA-FewDet/DeFRCN fine-tuning/meta-testing setting (Table \ref{exp_rb_tab}).}
 Below we compare our method with  with QA-FewDet \citelatex{QA_sup}/DeFRCN \citelatex{DeFRCN_sup}.

\setlength{\tabcolsep}{0.8pt}
\begin{table}[h]
\centering
\fontsize{6}{6}\selectfont
\caption{Comparison with QA-FewDet/DeFRCN (mAP\%).}
\label{exp_rb_tab}
\begin{tabular}{l|c|ccccc|ccccc|ccccc}
\toprule
\multirow{2}{*}{Method} & \multirow{1}{*}{Encoding} & \multicolumn{5}{c|}{Novel Set 1} & \multicolumn{5}{c|}{Novel Set 2} & \multicolumn{5}{c}{Novel Set 3} \\
& Network &1   & 2     & 3    & 5    & 10   & 1     & 2     & 3    & 5    & 10   & 1     & 2     & 3    & 5    & 10   \\ \midrule
\multicolumn{17}{c}{\textbf{Meta-training the model on base classes, and meta-testing on novel classes}} \\ \midrule
QA-FewDet &  ResNet-101 & {41.0} & {33.2} & {35.3} & {47.5} & {52.0}   & {23.5} & {29.4} & {37.9} & {35.9} & {37.1}   & {33.2} & {29.4} & {37.6} & {39.8} & {41.5} \\
\rowcolor{LightCyan}TENET (Ours) & ResNet-50 & \textbf{43.7} & \textbf{42.1} & \textbf{43.9} & \textbf{48.2} & \textbf{54.5}  & \textbf{32.5} & \textbf{35.2} & \textbf{39.5} & \bf  37.8 & \textbf{38.7} & \textbf{34.1} & \textbf{37.0} & \textbf{38.9} & \textbf{42.0} & \textbf{45.1} \\
\midrule
\multicolumn{17}{c}{\textbf{Fine-tuning the model on novel classes, and testing on novel classes}} \\ \midrule
QA-FewDet  & ResNet-101 & {42.4} & {51.9} & {55.7} & {62.6} & {63.4} & {25.9} & {37.8} & {46.6} & {48.9} & {51.1} & 35.2 & {42.9} & {47.8} & {54.8} & {53.5} \\
DeFRCN & ResNet-101 & \bf 53.6& \bf 57.5 &\bf 61.5& 64.1& 60.8& 30.1 &38.1& 47.0& \bf 53.3 &47.9 &\bf 48.4& \bf 50.9&\bf  52.3 &54.9& 57.4\\
TENET (Ours) & ResNet-50 &46.7&52.3 &55.4&62.3&66.9& 40.3 & 41.2&44.7&49.3&52.1 &35.5&41.5& 46.0&54.4&54.6\\
\rowcolor{LightCyan}TENET(Ours) & ResNet-101 & 48.5 & {55.2} & {58.7} & \bf 65.8 & \textbf{69.0}    & \bf 42.6 & \bf 43.4 & \textbf{47.9} & {52.0} & \bf 54.2    & {37.9} & {43.6}  & {48.8} & \bf56.9 & \bf{57.6} \\
\bottomrule
\end{tabular}
\end{table}

\newpage

\section{Comparison with SOTA on MS COCO minival set (10/30 -shot) as shown in Table \ref{exp_rb_coco}.}
\setlength{\tabcolsep}{1pt}
\begin{table}[!htbp]
\centering
\fontsize{7}{7}\selectfont
\caption{Evaluations on the MS COCO minival set (10/30- shot). Methods that do not disclose all shot results are ignored and are replaced with `--'.}
\label{exp_rb_coco}
\begin{tabular}{c|c|cc|cc|cc}
            \toprule
            \multirow{2}{*}{Method}&\multirow{2}{*}{Venue}&\multicolumn{2}{c|}{$AP$} &\multicolumn{2}{c|}{{$AP_{50}$}}&\multicolumn{2}{c}{{$AP_{75}$}}\\
           &&10  & 30 & 10  & 30& 10  & 30  \\
            \midrule
            FSCE+SVD  & NeurIPS 2021& 12.0  &16.0 & -- & -- &10.4  &15.3 \\
            FADI  & NeurIPS 2021  & 12.2  &16.1 & -- & -- &11.9  &15.8 \\
            SRR-FSD & CVPR 2021  &11.3&14.7 &23.0&29.2 &9.8&13.5\\
            Zhang \etal & CVPR 2021  & 12.6& -- & 27.0 & --  & 10.9  & --\\
            \midrule
            QA-FewDet & ICCV 2021  & 11.6 & 16.5  & 23.9  & 31.9& 9.8& 15.5\\
            DeFRCN & ICCV 2021  & 18.5 & 22.6   & --  & --& --  & --\\
             \midrule
            QSAM& WACV 2022 & 13.0 &15.3 &24.7& 29.3 &12.1 &14.5 \\
             FCT& CVPR 2022  & 15.3 & 20.2   & --  & --& --  & --\\
            \midrule
  \rowcolor{LightCyan}TENET& Ours &\textbf{19.1} & \textbf{23.7}& \textbf{27.4} & \textbf{32.2}& \textbf{19.6}& \textbf{23.1}\\
            \bottomrule
        \end{tabular}
        \end{table}
\section{Mean $\pm$ std of mAP on PASCAL VOC 2007  (Table \ref{voc_mstd}).}
\label{voc_std}

\setlength{\tabcolsep}{3pt}
\begin{table}[!htbp]
\centering
\fontsize{7}{7}\selectfont
\caption{Evaluations
on three test splits of  VOC 2007  (mean mAP $\pm$ std).}
\label{voc_mstd}
\begin{tabular}{ll cccc}
\toprule
\multicolumn{2}{c}{\multirow{2}{*}{Method/Shot}}&
\multicolumn{4}{c}{Mean$\pm$std}\\
\cmidrule{3-6}
&\multicolumn{1}{c}{}& 1 &3 &5&\multicolumn{1}{c}{10}\\
\midrule
FRCN &ICCV12  & 7.6$\pm$3.1&23.5$\pm$4.5&32.3$\pm$3.3&36.4$\pm$6.0\\ 
FR &ICCV19 &16.6$\pm1.9$&25.0$\pm$1.7&34.9$\pm$4.3&42.6$\pm$3.4\\
Meta& ICCV19 &14.9$\pm$3.9&30.7$\pm$3.2&40.6$\pm$4.5&48.3$\pm$2.5\\
FSOD& CVPR20&25.4$\pm$3.2&32.0$\pm$4.8&42.2$\pm$4.2&47.9$\pm$3.9\\
NP-RepMet& NeurIPS20&37.6$\pm$3.4&41.6$\pm$1.5&45.4$\pm$2.8&47.8$\pm$2.1\\

PNSD &ACCV20 &31.3$\pm$4.4&36.2$\pm$4.2&44.5$\pm$3.8&49.9$\pm$5.4\\
MPSR &ECCV20 &33.9$\pm$7.2&44.3$\pm$5.2&47.7$\pm$6.2&53.1$\pm$6.2\\
TFA &ICML20&31.4$\pm$6.7&40.5$\pm$4.6&46.8$\pm$8.6&48.3$\pm$7.0\\
FSCE &CVPR21&31.4$\pm$9.3&44.8$\pm$4.9&51.1$\pm$7.7&55.9$\pm$5.6\\
CGDP+FRCN &CVPR21&33.1$\pm$5.6&43.7$\pm$2.3&50.0$\pm$6.0&54.8$\pm$6.6\\
TIP &CVPR21&24.0$\pm$2.6&38.4$\pm$4.0&45.2$\pm$4.3&52.5$\pm$5.3\\
FSOD\textsuperscript{up}&ICCV21&36.8$\pm$5.2&45.1$\pm$3.8&50.1$\pm$4.6&54.5$\pm$5.5\\
QSAM&WACV22&26.1$\pm$3.5&35.4$\pm$2.9&45.4$\pm$3.6&51.9$\pm$3.8\\
\midrule
\rowcolor{LightCyan}TENET&(Ours) &{\bf 40.8$\pm$3.6}&{\bf 48.7$\pm$4.7}&{\bf55.3$\pm$3.1}&\bf{57.9$\pm$5.8}\\
\bottomrule
\end{tabular}
\end{table}

\bibliographystylelatex{splncs04}
\bibliographylatex{egbib}


\end{document}